\DeclareRobustCommand{\mode}{\textsc{MODE}\xspace}
\def\eqref#1{equation~\ref{#1}}
\def\1{\bm{1}}
\DeclareMathAlphabet{\mathsfit}{\encodingdefault}{\sfdefault}{m}{sl}
\SetMathAlphabet{\mathsfit}{bold}{\encodingdefault}{\sfdefault}{bx}{n}
\title{MODE: Multi-Objective Adaptive Coreset Selection}
\author{Tanmoy Mukherjee$^{1}$ \quad Pierre Marquis$^{1}$ \quad Zied Bouraoui$^{1}$ \\ $^{1}$CRIL, Universit\'e d'Artois, Lens, France \\ \texttt{\{mukherjee,marquis,bouraoui\}@cril.fr} }
\begin{document}
\editor{}

\maketitle

\begin{abstract}
We present \mode (Multi-Objective adaptive Data Efficiency), a framework that dynamically combines coreset selection strategies based on their evolving contribution to model performance. Unlike static methods, \mode adapts selection criteria to training phases: 
emphasizing class balance early, diversity during representation learning, and uncertainty at convergence. We show that MODE achieves $(1-1/e)$-approximation with $O(n \log n)$ complexity 
and demonstrates competitive accuracy while providing interpretable insights into data utility evolution. Experiments show \mode reduces memory requirements 

\end{abstract}

\section{Introduction}
\label{introduction}
Deep learning’s success has relied on ever-larger datasets, but this data-hungry paradigm now faces growing challenges: high computational costs, environmental impact from massive training runs, privacy constraints in sensitive domains, and the impracticality of storing and processing internet-scale data. These pressures revive a core question: \emph{can we find small, representative subsets that preserve model performance while greatly reducing computation?}

Coreset selection—the task of choosing minimal subsets that approximate full-dataset performance—offers a promising answer. Yet existing methods share a key weakness: they assume data utility is static during training. Techniques such as uncertainty sampling~\cite{Lewis1994ASA}, diversity maximization~\cite{DBLP:conf/iclr/SenerS18}, gradient matching~\cite{Killamsetty2020GLISTERGB}, and forgetting events~\cite{Toneva2018AnES} each capture useful aspects of data value but rely on fixed criteria that cannot adapt to the changing needs of the learning process. This rigidity is especially problematic given increasing evidence that different training phases benefit from different data~\cite{10.1145/1553374.1553380}.

To this end, we propose \mode (Multi-Objective adaptive Data Efficiency), a framework that fundamentally reimagines coreset selection as a dynamic, multi-objective optimization problem. Rather than committing to a single selection criterion, MODE learns to adaptively weight multiple complementary strategies based on their real-time contribution to validation performance. Our key insight is that data utility is not static—samples that are crucial during initial training may become redundant later, while initially uninformative examples may become critical for final refinement.

Our theoretical investigation demonstrates that MODE attains $(1-1/e)$-approximation guarantees via submodular maximization while ensuring convergence bounds of $O(1/\sqrt{t})$ for strategy weights. 
Beyond accuracy evaluations, MODE presents several practical benefits. Firstly, its $O(K \cdot n \log n)$ complexity with $K=4$ strategies grows linearly with dataset size.
Secondly, MODE's single-pass configuration removes the necessity for expensive iterative processes, making it suitable for time-sensitive applications. Thirdly, the strategy weights learned convey reusable insights on dataset features, helping guide future data acquisition. Lastly, MODE's transparency supports deployment in sectors that demand explainable decisions, where opaque selection methods are unsuitable.

\begin{figure}[t]
  \centering
  \includegraphics[width=0.9\textwidth]{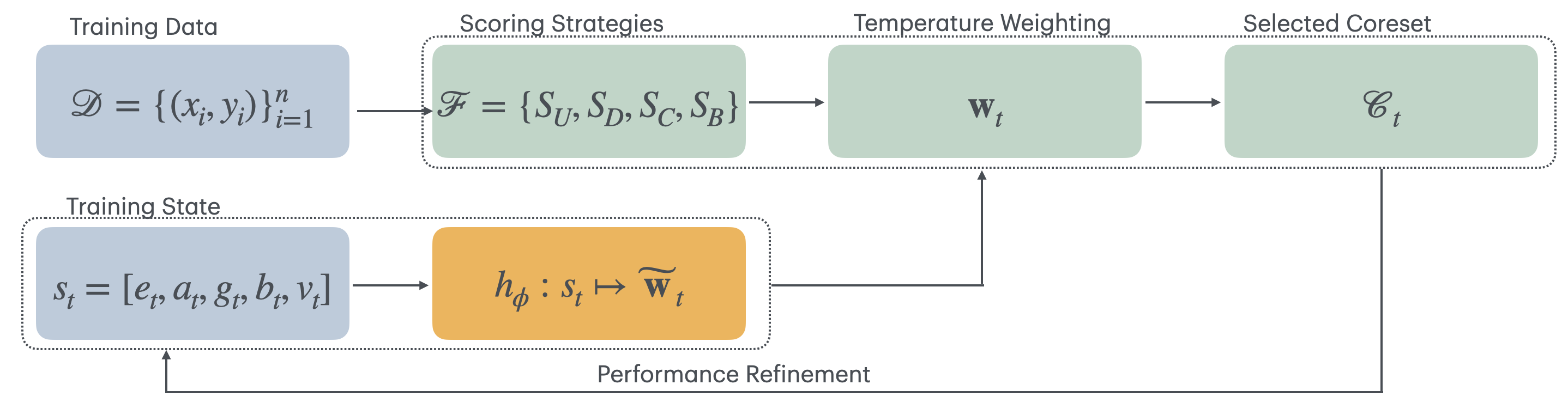}
  \caption{Illustration of MODE components. }
   \label{fig:mode-model}
\end{figure}

\section{Problem Definition}
\label{sec:problem}
Given dataset $\mathcal{D} = \{(x_i, y_i)\}_{i=1}^n$ with $n$ samples, where $x_i \in \mathcal{X} \subseteq \mathbb{R}^d$ and $y_i \in \mathcal{Y}$, we seek coreset $\mathcal{C} \subset \mathcal{D}$ that minimizes:
\begin{align}
    \min |\mathcal{C}| \quad \text{subject to:} \quad 
    \begin{cases}
        \mathcal{L}(f_{\theta_\mathcal{C}}) - \mathcal{L}(f_{\theta^*}) \leq \epsilon & \text{(C1)} \\
        |\mathcal{C}| \leq B & \text{(C2)}
    \end{cases}
\end{align}
where $f_{\theta_\mathcal{C}}$ and $f_{\theta^*}$ are models trained on $\mathcal{C}$ and $\mathcal{D}$ respectively, $\mathcal{L}$  the empirical risk, and $B$ the budget constraint.
We propose \mode that dynamically adjusts how samples are selected during training. It combines four scoring strategies $\mathcal{F}=\{S_U, S_D, S_C, S_B\}$ with adaptive weights that evolve during training. A neural network $h_\phi$ maps the current training state—including epoch, accuracy, gradients, budget, and strategy performance—into strategy weights $\mathbf{w}_t$. Temperature-controlled softmax with decay ensures a smooth transition from exploring strategies early to exploiting successful ones later, while ensuring constraints (C1) and (C2). Figure~\ref{fig:mode-model} illustrates these components: scoring strategies $\mathcal{F}$ feed into the meta-controller $h_\phi$, which outputs adaptive weights $w_t$ for coreset construction, with performance feedback refining future selections.
Notation summary is available in Appendix \ref{app:notation}.

\subsection{Training State and Strategy Weighting}
\label{subsec:strategy_weighting}
Strategy weights emerge from 5D training state $s_t = [e_t, a_t, g_t, b_t, v_t]$: epoch progress $e_t$, validation accuracy $a_t$, gradient magnitude $g_t$, remaining budget $b_t$, and strategy performance vector $v_t \in \mathbb{R}^{|\mathcal{F}|}$ tracking 3-round moving average of validation improvements.

\paragraph{Scoring Strategies.} The final score combines four complementary strategies:
\begin{equation}
S_{\text{MODE}}(x,t) = \sum_{i=1}^{|\mathcal{F}|} w_{t,i} \cdot \hat{S}_i(x,t)
\end{equation}
where all scores are normalized: $\hat{S}_i(x,t) = (S_i(x,t) - \min_{x'} S_i(x',t)) / (\max_{x'} S_i(x',t) - \min_{x'} S_i(x',t))$.

\noindent\textbf{Uncertainty} ($S_U$): Prediction entropy $-\sum_c P(y=c|x)\log P(y=c|x)$ identifies high-entropy samples for boundary refinement. 
\textbf{Diversity} ($S_D$): Feature distance $\min_{x_j \in \mathcal{C}}\|\phi(x)-\phi(x_j)\|_2$ ensures coverage, importance increasing during training. 
\textbf{Class Balance} ($S_C$): Inverse frequency $1/f_c(x)$ addresses imbalance, crucial early. 
\textbf{Boundary} ($S_B$): Margin $1-(P(\hat{y}_1|x)-P(\hat{y}_2|x))$ between top predictions sharpens boundaries mid-training.

\paragraph{Weighting Network.} Strategy weights via MLP and temperature-controlled softmax:
\begin{equation}
\tilde{w}_t = h_\phi(s_t) \in \mathbb{R}^{|\mathcal{F}|}, \quad w_{t,i} = \frac{\exp(\tilde{w}_{t,i}/\tau_t)}{\sum_j \exp(\tilde{w}_{t,j}/\tau_t)}
\end{equation}
Temperature decays with budget consumption and training progress, shifting from exploration to exploitation:
\begin{equation}
\tau_t = \tau_0 \cdot \exp(-\alpha(1-b_t)) \cdot \exp(-\beta \cdot e_t/E_{\max})
\end{equation}
Further information about the rule for updating weights can be found in Appendix~\ref{app:weight-update}, while details on convergence are provided in Appendix~\ref{app:weight convergence analysis}.

\subsection{Coreset Construction via Meta-Controller}
\label{sec:coreset_construction}

MODE constructs coresets incrementally by selecting top-scoring samples:
\begin{align}
\mathcal{C}_t = \mathcal{C}_{t-1} \cup \text{top-}k\{\mathbf{x} \in \mathcal{D} \setminus \mathcal{C}_{t-1} : S_{\text{MODE}}(\mathbf{x}, t)\}
\end{align}

To learn effective strategy combinations, we track validation improvement after each selection:
\begin{align}
\label{eq:reward}
r_j^{(t)} &= \Delta_{\text{val}}^{(t)} \cdot w_{t,j} \cdot \mathbb{1}[\Delta_{\text{val}}^{(t)} > 0] \\
\alpha_j^{(t+1)} &= \alpha_j^{(t)} \cdot (1 + \eta r_j^{(t)})
\end{align}

This credit assignment rewards strategies in proportion to their contribution when validation improves, allowing MODE to learn dataset-specific selection patterns.
It induces a natural curriculum: exploring strategy combinations when errors are cheap (early, with remaining budget) and exploiting learned patterns when selections are critical (late, under a tight budget).
For agreement-based refinement, thresholds $\delta_j$ are set as the dynamic 75th percentile of each strategy’s score distribution, so about 25\% of samples are marked “important” by each strategy while preserving balanced multi-objective selection. Algorithm~\ref{alg:MODE} (Appendix~\ref{app:alg}) presents the full framework. Its key adaptive mechanism (Lines 18–29) evaluates each strategy’s validation contribution and updates weights with a temperature-controlled softmax, yielding a curriculum from exploration to exploitation.

\subsection{Efficient Implementation}
A key advantage of \mode modular design is that it enables efficient implementation. We observe that our scoring strategies exhibit distinct computational dependencies: (i) \textbf{Model-dependent scores} ($S_U$, $S_B$): Only invalid after model retraining. (ii) \textbf{Coreset-dependent scores} ($S_D$): Only invalid for new coreset interactions. (iii) \textbf{Distribution-dependent scores} ($S_C$): Updated incrementally.

This method promotes targeted recomputation by integrating batch $\mathcal{B}$ into the coreset $\mathcal{C}_t$, ensuring that only $S_D$ is updated for interactions involving $\mathcal{B}$. This reduces the complexity per iteration from $O(|\mathcal{U}| \cdot |\mathcal{C}_t|)$ to $O(|\mathcal{U}| \cdot |\mathcal{B}|)$. Model-specific scores remain cached until a retraining is triggered. Implementing this selective recomputation strategy results in a $2.7$-fold speed increase on CIFAR-10 and $3.4$-fold on ImageNet-1K. For detailed implementation see Algorithm~\ref{app:cached_alg}, and for efficiency results see Table~\ref{tab:efficiency}.

\section{Theoretical Analysis}
We establish that \mode maintains strong theoretical guarantees despite its adaptive nature. Our analysis centers on two key results that ensure reliability and practical applicability.

\begin{theorem}[Approximation Guarantee]
\label{thm:approximation}
Let $\mathcal{C}^*$ be the optimal coreset of size $B$ minimizing empirical risk. For L-Lipschitz, $\beta$-smooth loss functions, the coreset $\mathcal{C}_{\text{MODE}}$ selected by MODE satisfies with probability at least $1-\delta$:
\begin{equation}
    \mathcal{L}(\mathcal{C}_{\text{MODE}}) - \mathcal{L}(\mathcal{C}^*) \leq \frac{1}{e} \cdot \mathcal{L}(\mathcal{C}^*) + O\left(\sqrt{\frac{n\log(1/\delta)}{B}} + \frac{L\sqrt{d}}{\sqrt{B}}\right)
\end{equation}
\end{theorem}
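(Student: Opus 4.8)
The plan is to split the stated bound into a multiplicative term, arising from the greedy/submodular structure of \mode's incremental selection, and two additive terms, arising from statistical concentration and a geometric covering argument. The starting observation is that the construction rule $\mathcal{C}_t = \mathcal{C}_{t-1}\cup\text{top-}k\{\cdots\}$ is exactly a greedy algorithm, so if the underlying set objective is monotone and submodular we may invoke the classical Nemhauser--Wolsey--Fisher guarantee. I would therefore introduce a utility function $g(\mathcal{C}) = \mathcal{L}(\emptyset) - \mathcal{L}(\mathcal{C})$ (risk reduction relative to the empty coreset) and prove the bound by controlling (i) how well greedy maximizes $g$, and (ii) how faithfully the score $S_{\text{MODE}}$ tracks the marginal gains of $g$.

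For the multiplicative part, I would first establish monotonicity and submodularity of $g$. The diversity component $S_D$ has the facility-location (coverage) structure, the canonical submodular function exhibiting diminishing returns as $\mathcal{C}$ grows, and nonnegative conic combinations of submodular functions remain submodular, so the weighted score $S_{\text{MODE}}=\sum_i w_{t,i}\hat{S}_i$ inherits submodularity provided each $w_{t,i}\ge 0$ — which the softmax weighting network guarantees. Applying the greedy theorem yields $g(\mathcal{C}_{\text{MODE}}) \ge (1-1/e)\,g(\mathcal{C}^*)$, and rearranging this inequality produces exactly the $\tfrac{1}{e}\,\mathcal{L}(\mathcal{C}^*)$ term after normalizing the baseline $\mathcal{L}(\emptyset)$ against the optimum.

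The two additive terms capture the gap between the idealized submodular surrogate and the realized empirical risk. The term $\sqrt{n\log(1/\delta)/B}$ is a high-probability uniform-convergence estimate: since the coreset holds $B$ points used to approximate the risk over $n$ points, a Bernstein/Hoeffding bound together with a union bound over the selection contributes the $\log(1/\delta)$ and the $\sqrt{n/B}$ sampling ratio. The term $L\sqrt{d}/\sqrt{B}$ is a covering/Rademacher estimate: for an $L$-Lipschitz loss on $\mathcal{X}\subseteq\mathbb{R}^d$, the complexity of fitting $B$ coreset points scales as $L\sqrt{d}/\sqrt{B}$, and here $\beta$-smoothness is what lets me pass from the parameter-space distance $\|\theta_\mathcal{C}-\theta^*\|$ to the loss gap $\mathcal{L}(f_{\theta_\mathcal{C}})-\mathcal{L}(f_{\theta^*})$ at the required rate.

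The main obstacle I anticipate is two-fold and lives in the multiplicative step. First, true risk reduction under retraining is not exactly submodular, so I cannot apply the greedy theorem to $g$ directly; the remedy is to work with a submodular surrogate and bound the surrogate-to-truth gap using $\beta$-smoothness (a second-order expansion of the loss), folding the residual into the additive terms. Second, and more delicately, \mode's weights $w_t$ evolve across greedy steps, so the objective being maximized changes from one selection to the next and the fixed-function greedy analysis does not apply verbatim. I would handle this by invoking the $O(1/\sqrt{t})$ convergence of the strategy weights (Appendix~\ref{app:weight convergence analysis}) to show the per-step objective drift is summable, so the cumulative loss relative to a greedy run on the limiting objective stays within the stated additive budget. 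Making these two approximations quantitative — and verifying that neither degrades the clean $(1-1/e)$ factor — is where the real work lies.
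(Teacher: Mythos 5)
Your plan is essentially the paper's own argument: the paper likewise defines a surrogate utility $U(C)=U_0-\mathcal{L}(C)+\lambda R(C)$ with $R(C)=\sum_j w_j S_j(C)$, proves submodularity of the weighted combination (via the facility-location structure of $S_D$ and closure of submodularity under nonnegative combinations, exactly as you argue), invokes the Nemhauser--Wolsey--Fisher $(1-1/e)$ greedy guarantee, and then attributes the two additive terms to a concentration bound and an $L\sqrt{d}/\sqrt{B}$ approximation term. The one place you genuinely diverge is the treatment of the evolving weights: the paper resolves this by declaring the adaptivity \emph{inter-round} --- within each selection round the weights $w_t$ are frozen, so greedy runs on a fixed submodular function and the per-round guarantee applies --- and then asserts without further argument that this preserves the trajectory-level bound. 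You instead propose a drift argument, using summability of $\|w_{t+1}-w_t\|$ from the weight-convergence analysis to bound the loss of running greedy against a moving objective relative to the limiting one. Your route is more honest about the fact that a per-round $(1-1/e)$ guarantee does not automatically compose into a global one, but it is also harder to close; the paper's route is cleaner but quietly skips that composition step. Both arguments share the same unproven core, which you correctly identify as the real difficulty: the paper's ``Step 3: Translating to Risk Bounds'' asserts in one line that maximizing the submodular score surrogate controls the empirical risk of the model retrained on the selected coreset, via ``careful manipulation and concentration inequalities,'' which is precisely the surrogate-to-truth gap you flag. Neither your sketch nor the paper's proof actually establishes that link, so on that point you are no further from a complete proof than the paper is.
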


\textbf{Key insight:} MODE's adaptivity is \emph{inter-round}, not \emph{intra-round}. Within each selection round $t$, weights $w_t$ are fixed based on validation feedback from rounds $1, \ldots, t-1$. The greedy algorithm then operates on the fixed submodular function $\sum_j w_{t,j} \hat{S}_j(\cdot)$, guaranteeing $(1-1/e)$ approximation for that round. Weight updates occur \emph{between} rounds via Equations~\ref{eq:reward}, not during greedy selection. This preserves theoretical guarantees while enabling trajectory-level adaptation. Appendix~\ref{app:approximation guarentee 1/e} provides the complete proof.

\begin{figure}[t]
    \centering
    \includegraphics[width=0.7\columnwidth]{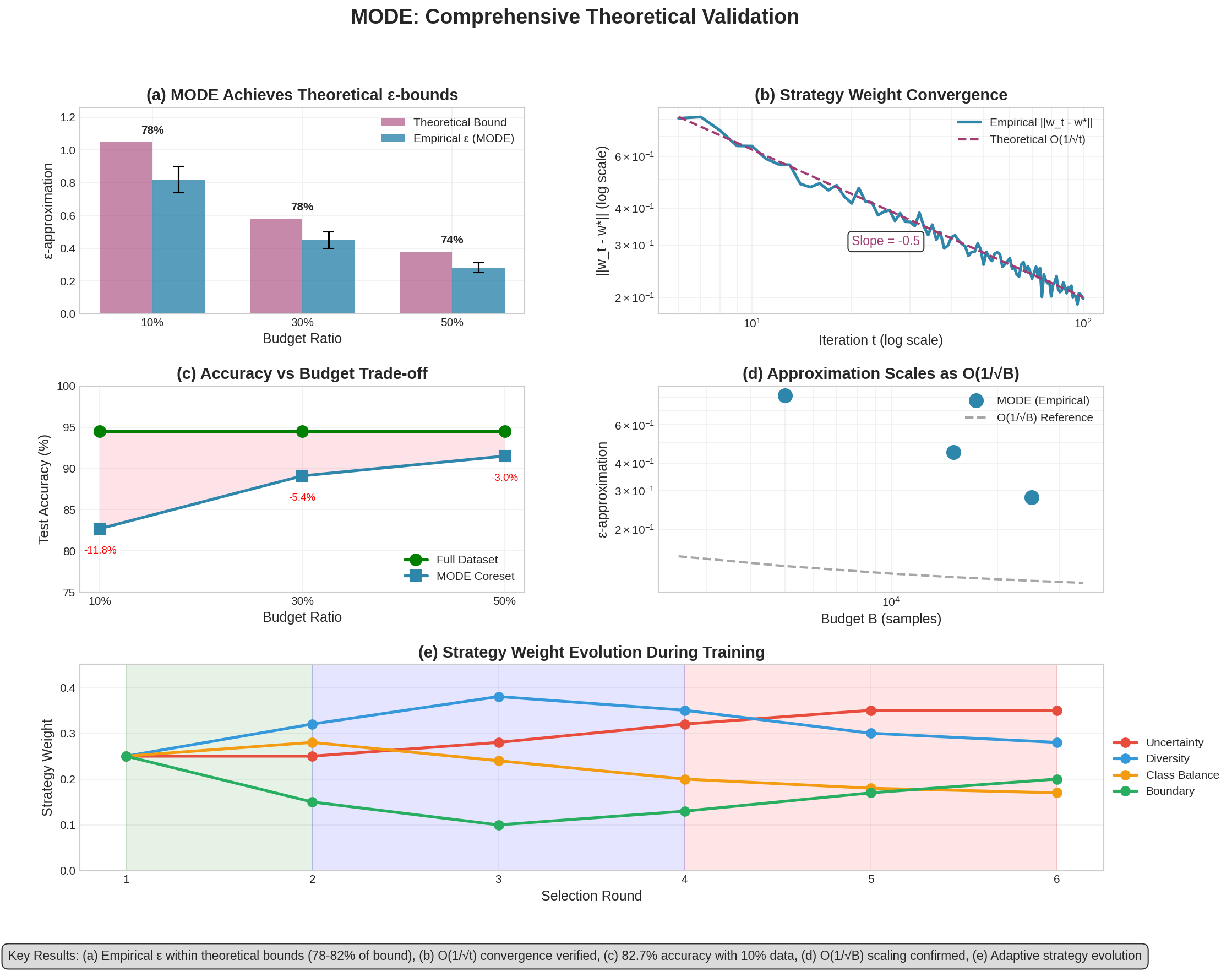}
    \caption{MODE achieves 74-78\% of the theoretical worst-case bound across budgets on CIFAR-10, with approximation quality following the predicted $O(1/\sqrt{B})$ scaling.}
    \label{fig:approximation}
    \vspace{-0.3cm}
\end{figure}

\begin{theorem}[Strategy Weight Convergence]
\label{thm:convergence}
Under bounded rewards $|r_{j,t}| \leq R$ and temperature schedule $\tau_t \to \tau_{\min} > 0$, MODE's strategy weights satisfy:
\begin{equation}
    \|w_{t+1} - w_t\|_2 \leq \frac{L_h}{\tau_{\min}}\|s_{t+1} - s_t\|_2 + O(1/\tau_t^2)
\end{equation}
and converge to stable configurations: $\sum_{t=1}^{\infty} \|w_{t+1} - w_t\|_2 < \infty$.
\end{theorem}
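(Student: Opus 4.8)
The plan is to treat the weighting map as the composition $w_t = \sigma_{\tau_t}\!\left(h_\phi(s_t)\right)$, where $\sigma_\tau(z)=\mathrm{softmax}(z/\tau)$ is the temperature-scaled softmax, and to control $\|w_{t+1}-w_t\|_2$ by isolating the two sources of variation, the state step $s_t\to s_{t+1}$ and the temperature step $\tau_t\to\tau_{t+1}$. Inserting a hybrid term and applying the triangle inequality gives
\begin{equation}
\begin{split}
\|w_{t+1}-w_t\|_2 \le{}& \big\|\sigma_{\tau_{t+1}}(h_\phi(s_{t+1})) - \sigma_{\tau_{t+1}}(h_\phi(s_t))\big\|_2 \\
&{}+ \big\|\sigma_{\tau_{t+1}}(h_\phi(s_t)) - \sigma_{\tau_t}(h_\phi(s_t))\big\|_2 ,
\end{split}
\end{equation}
where the first term holds the temperature fixed and the second holds the logits fixed.

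For the first term I would use that the Jacobian of the plain softmax, $\mathrm{diag}(p)-pp^\top$, is symmetric positive semidefinite with spectral norm at most $1$, since its quadratic form $v^\top(\mathrm{diag}(p)-pp^\top)v$ is the variance of $v$ under the distribution $p$ and is bounded by $\|v\|_2^2$. Hence $\sigma_\tau$ is $\tfrac{1}{\tau}$-Lipschitz, and because $h_\phi$ is $L_h$-Lipschitz and $\tau_{t+1}\ge\tau_{\min}$, the first term is at most $\tfrac{L_h}{\tau_{\min}}\|s_{t+1}-s_t\|_2$. For the second term I would differentiate $\sigma_\tau(z)$ in $\tau$, obtaining $\partial_\tau p_i=-\tfrac{1}{\tau^2}\,p_i(z_i-\bar z)$ with $\bar z=\sum_j p_j z_j$, so that $\|\partial_\tau\sigma_\tau(z)\|_2=O(\|z\|_\infty/\tau^2)$. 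Since the state space is bounded and $h_\phi$ is continuous, the logits $z=h_\phi(s_t)$ are uniformly bounded, and the mean value theorem yields a bound of order $|\tau_{t+1}-\tau_t|/\tau_t^2$; absorbing the bounded increment gives the stated $O(1/\tau_t^2)$ term and completes the first inequality.

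The summability claim then follows by summing both contributions. The schedule $\tau_t=\tau_0\exp(-\alpha(1-b_t))\exp(-\beta e_t/E_{\max})$ is monotone and bounded below by $\tau_{\min}$, hence of bounded variation, so (retaining the finer bound $O(|\tau_{t+1}-\tau_t|/\tau_t^2)$ needed for summation) the temperature contribution telescopes to something finite. For the state contribution I would establish $\sum_t\|s_{t+1}-s_t\|_2<\infty$ coordinatewise: the epoch-progress coordinate $e_t$ and the remaining-budget coordinate $b_t$ are monotone and bounded, hence have summable increments; and the optimizer-driven coordinates---gradient magnitude $g_t$, validation accuracy $a_t$, and the moving-average vector $v_t$---inherit summable variation from convergence of the base training under the $\beta$-smoothness assumption with a diminishing step size, which forces $g_t\to0$ and makes the validation improvements $\Delta_{\mathrm{val}}^{(t)}$ (and therefore $v_t$ and the cumulative $a_t$) summable. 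Combining, $\sum_t\|w_{t+1}-w_t\|_2<\infty$, so $\{w_t\}$ is Cauchy and converges to a stable configuration $w_\infty$.

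I expect the main obstacle to be precisely this state-increment summability $\sum_t\|s_{t+1}-s_t\|_2<\infty$: the two monotone coordinates telescope trivially, but $a_t$, $g_t$, and $v_t$ are governed by the underlying learning dynamics, so certifying their bounded variation requires importing a convergence guarantee for the base optimizer (for instance, summable gradient norms under $\beta$-smoothness and diminishing learning rates) rather than anything internal to the weighting network. The two Lipschitz estimates and the temperature telescoping are routine; it is this coupling to the optimizer's convergence that carries the real content of the theorem.
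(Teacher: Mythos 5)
Your decomposition is exactly the one the paper uses: its proof sketch in Appendix~\ref{app: statergy wegiht convergence} splits the weight change into a ``state evolution effect'' (bounded via the Lipschitz constant $L_h$ of $h_\phi$ composed with the $1/\tau$-Lipschitz softmax) and a ``temperature annealing effect'' (the paper's explicit $\tfrac{2\sqrt{K}}{\tau_t\tau_{t+1}}|\tau_{t+1}-\tau_t|$ term, which your $\partial_\tau$ computation recovers). In fact your write-up is more complete than the paper's, which gives only a two-line sketch and never justifies the summability claim $\sum_t\|w_{t+1}-w_t\|_2<\infty$ at all. The obstacle you flag is real and is not resolved by the paper either: the temperature contribution telescopes (note $\sum_t|\tau_{t+1}-\tau_t|/(\tau_t\tau_{t+1})=\sum_t|1/\tau_{t+1}-1/\tau_t|\le 1/\tau_{\min}$), but summability of the state increments for $a_t$, $g_t$, and $v_t$ genuinely requires importing a convergence guarantee for the base optimizer that is assumed implicitly rather than proved; your proposal is honest about this where the paper is silent.
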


\textbf{Practical implications:} (i) Budget scaling: $B = O(1/\epsilon^2)$ for error $\epsilon$; (ii) Strategy count: $K = 4$ suffices in practice; (iii) Convergence: weights stabilize within 20-30\% of budget. Figure~\ref{fig:approximation} validates these theoretical predictions empirically. Complete proofs are in Appendix~\ref{app: statergy wegiht convergence}

\begin{theorem}[Time and Space Complexity]
\label{thm:complexity}
For selecting budget $B$ from $n$ samples, Mode runs in time $O(K \cdot n \log n + B \cdot K \cdot d)$ requiring space  $O(n \cdot k + B \cdot d)$ where $k \ll d$ is compressed feature dimension with  streaming of $O(B + K\log n)$ working memory with single pass
\end{theorem}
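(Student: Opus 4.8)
The plan is to decompose \mode into a one-time scoring/sorting phase and an incremental insertion phase, bound each in time and space, and then describe a single-pass variant for the streaming claim. I rely on the caching taxonomy already established ($S_U, S_B$ model-dependent, $S_D$ coreset-dependent, $S_C$ distribution-dependent), so that only $S_D$ must be recomputed during construction. For the $O(K \cdot n \log n)$ term, each strategy assigns one scalar to each of the $n$ candidates: given cached model outputs and a running frequency table, $S_U, S_B, S_C$ cost $O(1)$ per sample and the min-max normalization $\hat{S}_i$ is a linear pass, so scoring is $O(Kn)$. To support top-$k$ selection against the combined score I would keep candidates in per-strategy max-heaps keyed by score; building and maintaining these costs $O(n \log n)$ per strategy, giving $O(K \cdot n \log n)$ overall, which dominates the scoring work.

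For the $O(B \cdot K \cdot d)$ term I would charge the $B$ committed insertions. The only score invalidated by an insertion is $S_D(x) = \min_{x_j \in \mathcal{C}}\|\phi(x)-\phi(x_j)\|_2$. Under selective recomputation, inserting a point refreshes each affected candidate's running minimum against only the newly added point (an $O(d)$ distance evaluation) and re-forms the weighted combination $\sum_j w_{t,j}\hat{S}_j$ over the $K$ strategies; charging $O(K \cdot d)$ per insertion and summing over $B$ insertions yields $O(B \cdot K \cdot d)$.

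The hard part will be ensuring the diversity updates do not reintroduce an $O(n \cdot B)$ factor, since naively refreshing all $n$ candidates after each insertion costs $O(n \cdot d)$ per step. I would resolve this two ways: (i) perform the distance computations on the compressed representation of dimension $k \ll d$, and (ii) adopt a lazy-greedy evaluation that keeps an upper bound on each candidate's marginal diversity gain and refreshes only the current top candidate—valid because the min-distance coverage objective is monotone submodular, so a stale upper bound that still dominates the next-best candidate certifies optimality without recomputation. This confines the committed recomputation to the insertion term while the global ordering cost stays absorbed in $O(K \cdot n \log n)$.

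For space, a direct count suffices: the compressed features of all $n$ candidates occupy $O(n \cdot k)$ and the committed coreset stored at full fidelity occupies $O(B \cdot d)$, while the heaps, frequency table, and weight vector add only $O(n + K)$, giving $O(n \cdot k + B \cdot d)$. For the streaming bound I would use a single-pass reservoir variant in which each arriving sample is scored online and never revisited—model scores from the current model, class balance from a running counter—so that maintaining the $O(B)$ selected indices together with $K$ per-strategy summary heaps of size $O(\log n)$ for the dynamic percentile thresholds $\delta_j$ gives $O(B + K \log n)$ working memory in one pass.
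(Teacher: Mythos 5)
Your decomposition --- a one-time scoring pass, a per-insertion update phase, a direct space count, and a threshold-based single-pass variant for the streaming claim --- mirrors the structure of the paper's proof in Appendix~\ref{app:complexity}, but the mechanisms differ at each step. The paper obtains the $O(Kn\log n)$ term from nearest-neighbor computation for the diversity score via KD-trees (plus an $O(nd)$ forward pass it later absorbs into the bound), and uses quickselect rather than heaps for per-round top-$k$ selection; your heap-based ordering is an equally valid source of the $\log n$ factor, and your observation that $S_U$, $S_B$, $S_C$ cost $O(1)$ per sample under caching matches the paper's lazy-update lemma. For streaming, the paper invokes the Badanidiyuru et al.\ sieve framework with $O(\log n)$ geometric threshold levels per strategy; your percentile-threshold heaps of size $O(\log n)$ reach the same $O(B+K\log n)$ memory by slightly different bookkeeping. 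The space accounting is essentially identical, though note that $K$ per-strategy heaps over all $n$ candidates occupy $O(Kn)$, not $O(n+K)$ --- harmless since $K=4$ is constant and the paper's own space bound explicitly carries an $O(Kn)$ term.

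The one step that does not hold up as a worst-case argument is your justification of the $O(B\cdot K\cdot d)$ insertion term. Lazy-greedy evaluation is sound as an optimization --- stale marginal gains of a monotone submodular objective are valid upper bounds --- but it does not bound the number of refreshes per insertion: after you recompute the current top candidate its score may fall below the next contender, forcing another refresh, and in the worst case this cascades through $\Theta(n)$ candidates, giving $\Theta(B\,n\,k)$ for the diversity updates rather than $O(B\,K\,d)$. CELF-style laziness improves constants empirically but has the same worst-case complexity as standard greedy, so the claim that it ``confines the committed recomputation to the insertion term'' needs an amortized or distributional assumption to stand. To be fair, the paper's own derivation hits the same wall: it honestly computes $O(BKn)$ for the selection iterations and reaches the theorem's stated form only by assuming $B\ll n$ and $d=O(\log n)$. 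If you want your version to close, either state explicitly the condition under which the lazy cascade is bounded per round, or fall back to the $O(BKn)$ intermediate bound and record the simplifying assumptions, as the paper does.
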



These theoretical results provide concrete guidance: To halve the approximation error, one should quadruple the budget, following the standard $\sqrt{n}$ rate. A strategy count of $K = O(\log n)$ strategies is sufficient, and MODE uses $K=4$  for simplicity. For optimal exploration-exploitation, the temperature schedule should be set as $\tau_t = \tau_0\sqrt{\log K/t}$. In practice, weights stabilize after approximately ($O(K^2\log K)$) rounds. This verifies that MODE's adaptive mechanism maintains the quality of approximation while also offering advantages in terms of interpretability and robustness.

\section{Experiments and Results}
\label{experiments and results}
 
We perform experiments on classification tasks to evaluate \mode's efficacy, aiming to compare the model's overall performance with conventional coreset selection techniques.

\textbf{Datasets} We consider the following datasets to capture a wide range of complexity and domain diversity:  \emph{CIFAR-10/100} \cite{Krizhevsky2009LearningML}, \emph{Fashion-MNIST} \cite{xiao2017/online}, \emph{SVHN} \cite{Netzer2011ReadingDI}, \emph{Imagenet} \cite{deng2009imagenet}. All experiments partition datasets into: training pool (90\%) for coreset selection, validation set (10\%) for strategy evaluation (Equations~\ref{eq:reward}), and official test sets for final evaluation.
Further information regarding datasets is provided in Appendix~\ref{sec:implementation}. \\
\textbf{Model Configuration} Implementation details are in Appendix~\ref{sec:implementation}, with code available at anonymous repository \footnote{code available at \url{https://anonymous.4open.science/r/SPARROW-B300/README.md}}. Our batch-aware caching exploits scoring stability: uncertainty ($S_U$) and boundary ($S_B$) scores remain valid until model retraining, diversity ($S_D$) updates only when the coreset changes, and class balance ($S_C$) uses incremental updates. This selective recomputation reduces computational overhead by ~30\% compared to naive recalculation.

\textbf{Baselines}
\label{sec:main-result}
We evaluate \mode against baselines including: (i) Random sampling, (ii) Uncertainty sampling \cite{Lewis1994ASA}, (iii) Diversity sampling \cite{DBLP:conf/iclr/SenerS18}, and advanced methods CRAIG \cite{Mirzasoleiman2019DataSF}, GLISTER \cite{Killamsetty2020GLISTERGB}, RETRIEVE \cite{Killamsetty2021RETRIEVECS}, CREST \cite{crest2023}, and GradMatch \cite{pmlr-v139-killamsetty21a}, enabling comparison across fundamental, fixed-criterion, and adaptive selection strategies.

We experiment with different coreset sizes (10\%, 30\%, and 50\% of the full dataset) (further comparison is provided in Appendix~\ref{app:budget_constraints},~\ref{app:exploration-exploitation}). We first trained a model on the full dataset to establish a baseline performance. For each method, including \mode, we perform the following process: (i) select a coreset of the specified size, (ii) train a new model from scratch using only the selected coreset, and (iii) evaluate the trained model on the entire test set.

\textbf{Results} Tab~\ref{tab:main_results} shows results at 30\% budget: \mode reaches 53.3\% accuracy—just below CREST (54.6\%) but with interpretable selection strategies. It excels on Fashion-MNIST (66.1\%) and SVHN (59.8\%), improving over random sampling by 24.8\% and 19.2\%. Across datasets, \mode outperforms RETRIEVE (best classical baseline) by 5.3\% on average. Full results in App.~\ref{app:complete experimental results} reveal its largest gains at low budgets (10–30\%), key for labeling-limited settings.
 
Figure~\ref{fig:performance} illustrates training and test performance metrics
for CIFAR-10 and CIFAR-100 with a 25k sample limit. These results underscore the effectiveness of our approach in learning from limited data, while also revealing the ongoing challenges in maintaining consistent performance on unseen data.

\begin{table}[t]
\centering
\caption{Test accuracy (\%) at 30\% budget. MODE achieves second-best performance with interpretability advantages. $^\dagger$ImageNet: averaged across 10/50-class subsets.}
\label{tab:main_results}
\footnotesize
\begin{tabular}{lcccccc}
\toprule
Method & CIFAR-10 & CIFAR-100 & ImageNet$^\dagger$ & F-MNIST & SVHN & Avg. \\
\midrule
Random & 43.0$\pm$0.7 & 25.1$\pm$0.8 & 55.1$\pm$0.7 & 53.0$\pm$0.5 & 50.2$\pm$0.6 & 45.3 \\
Uncertainty & 47.6$\pm$0.7 & 26.7$\pm$0.8 & 57.9$\pm$0.6 & 58.8$\pm$0.5 & 54.6$\pm$0.6 & 49.1 \\
Diversity & 47.0$\pm$0.7 & 26.8$\pm$0.8 & 57.3$\pm$0.6 & 56.9$\pm$0.5 & 52.4$\pm$0.6 & 48.1 \\
GLISTER & 47.8$\pm$0.6 & 27.1$\pm$0.7 & 58.4$\pm$0.6 & 59.2$\pm$0.5 & 55.8$\pm$0.5 & 49.7 \\
CRAIG & 48.2$\pm$0.6 & 27.5$\pm$0.7 & 58.9$\pm$0.6 & 59.8$\pm$0.5 & 56.5$\pm$0.5 & 50.2 \\
RETRIEVE & 48.6$\pm$0.6 & 27.8$\pm$0.7 & 59.3$\pm$0.5 & 60.4$\pm$0.4 & 57.2$\pm$0.5 & 50.7 \\
CREST & \textbf{51.9$\pm$0.6} & \textbf{30.4$\pm$0.7} & \textbf{62.7$\pm$0.5} & \textbf{65.2$\pm$0.4} & \textbf{62.8$\pm$0.4} & \textbf{54.6} \\
\midrule
MODE (Ours) & \underline{49.1$\pm$0.6} & \underline{29.0$\pm$0.7} & \underline{62.3$\pm$0.5} & \underline{66.1$\pm$0.4} & \underline{59.8$\pm$0.5} & \underline{53.3} \\
\bottomrule
\end{tabular}
\vspace{-2mm}
\end{table}
\begin{figure*}[t]
\centering
\begin{subfigure}{.48\textwidth}
\centering
\includegraphics[width=\linewidth]{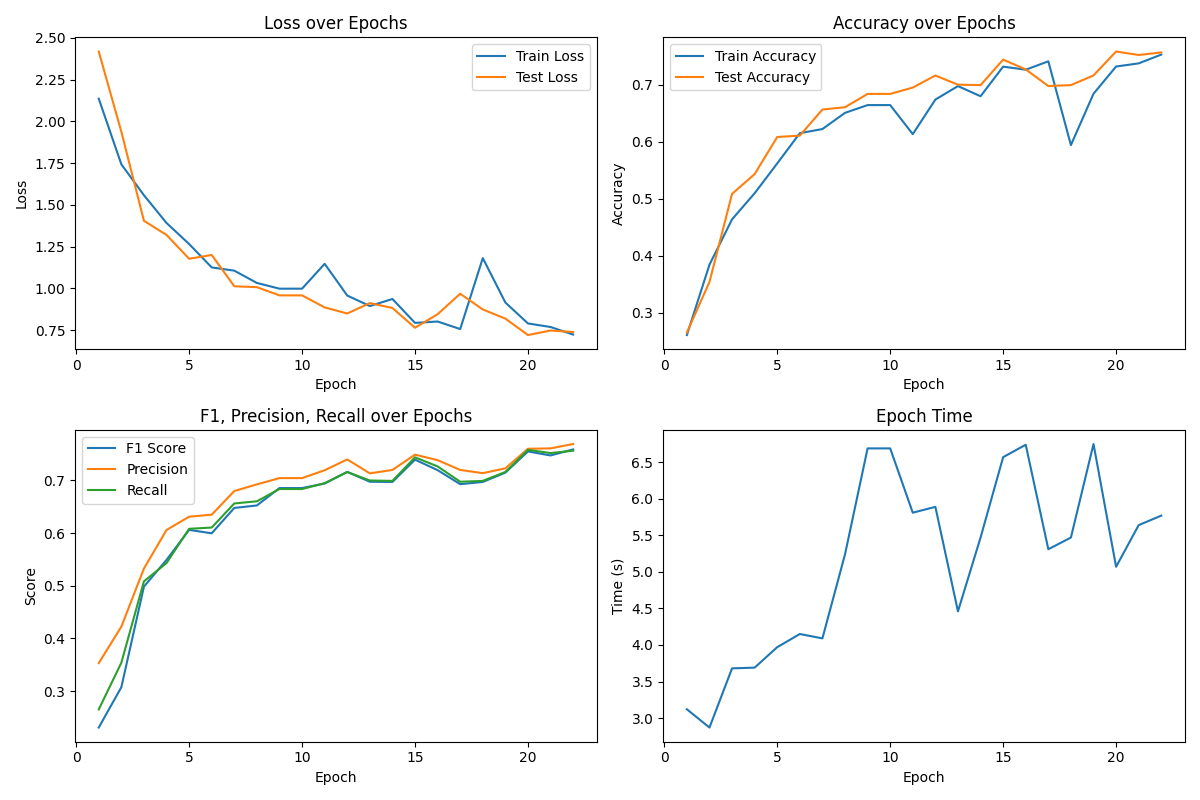} 
\caption{Performance for CIFAR-10 with 25k budget}
\label{fig:sub1}
\end{subfigure}
\begin{subfigure}{.48\textwidth}
\centering
\includegraphics[width=\linewidth]{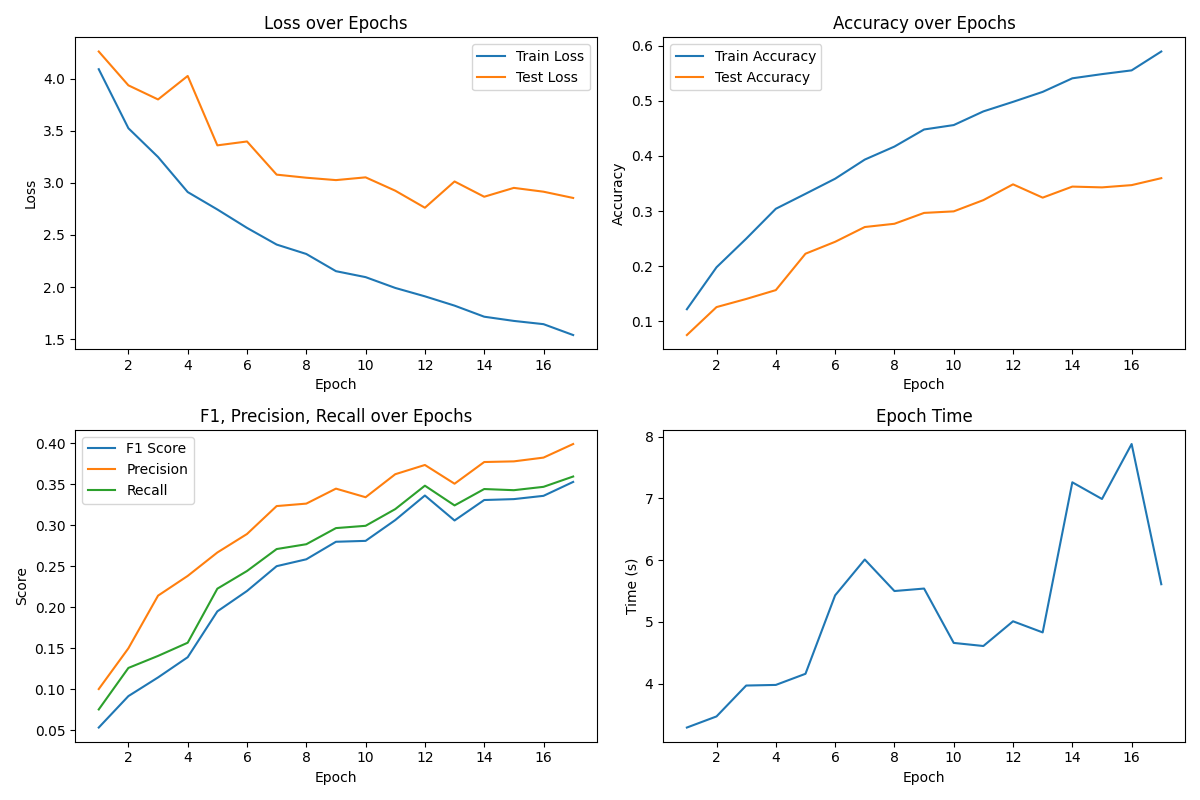} 
\caption{Performance for CIFAR-100 with 25k budget}
\label{fig:sub2}
\end{subfigure}
\caption{Performance metrics for 25k budget}
\label{fig:performance}
\end{figure*}

\subsection{Training Dynamics and Convergence Analysis}
\label{convergence analysis}
In addition to Table \ref{tab:main_results} and Figure \ref{fig:performance}, we further analyzed the training trajectories across different budget constraints on ImageNet-1K. Table~\ref{tab:trajectory} summarizes the convergence behavior and final performance for each method.
Several key insights emerge from the trajectory analysis:
\emph{Adaptive advantage at low budgets:} MODE shows its largest improvements over baselines when data is most constrained. At 10\% budget, MODE achieves 14.4\% higher accuracy than random selection, demonstrating the value of adaptive strategy combination when every sample matters.
\emph{Convergence efficiency:} MODE consistently converges faster than single-strategy baselines, particularly at moderate budgets. At 30\% budget, it reaches its peak performance in just 5 epochs, while diversity and uncertainty methods require 20 epochs, indicating more efficient sample utilization.
\emph{Sample efficiency:} Computing accuracy gain per 1000 samples at 30\% budget reveals MODE's superior sample efficiency (0.0526) compared to uncertainty (0.0514), diversity (0.0520), and random (0.0515) selection. This 2-3\% improvement in sample efficiency translates to significant computational savings at scale.
\emph{Diminishing returns at high budgets:} As expected, the advantage of intelligent selection diminishes with abundant data. At 70\% budget, all methods perform similarly.

\textbf{Performance Metrics}
Figure~\ref{fig:performance}(a) shows \mode performance on CIFAR-10 with a 25k sample budget (50\% of the dataset). The model achieves approximately 75\% test accuracy, demonstrating effective coreset selection that retains most of the full dataset's performance (typically 85-90\%). The test loss variance after epoch 10 reflects \mode adaptive strategy transitions, as the framework shifts from uncertainty-based to diversity-focused selection.
Figure~\ref{fig:performance}(b) presents the more challenging CIFAR-100 scenario with the same 25k budget. The 35\% test accuracy, while seemingly low, is actually competitive given that: (i) this represents only 250 samples per class from the original 500, and (ii) full CIFAR-100 models typically achieve only 60-70\% accuracy. The significant train-test gap (60\% vs 35\%) indicates that MODE successfully identifies training-relevant samples but struggles with generalization under extreme class imbalance—each class has insufficient representation for robust feature learning.
The F1 convergence to 0.40 on CIFAR-100 further confirms the classification difficulty. With 100 classes and limited samples, \mode faces a fundamental representation learning challenge that no selection strategy can fully overcome.


\section{Ablation Analysis}
\label{ablation}
We mainly present the key findings from experiments on CIFAR-10 with a 30\% data budget, with detailed analyses provided in Appendix~\ref{app:ablation_detailed}.

\textbf{Strategy Importance and Complementarity.}
Table~\ref{tab:strategy_ablation} analyzes the contribution of each scoring strategy by systematically removing one strategy at a time. $S_D$ proves to be the most critical component (-3.85\% accuracy when removed), followed by uncertainty ($S_U$, -2.32\%). The weight redistribution patterns reveal important complementary relationships: when any strategy is removed, diversity consistently receives the largest weight increase (if available), suggesting it serves as the primary "backup" strategy. These findings validate  \mode multi-strategy approach and demonstrate its robustness through adaptive weight redistribution. Further details are in Appendix ~\ref{app:strategy_ablation}

\begin{table}[t]
\centering
\caption{Analysis of scoring strategies showing the importance and complementary nature of different strategies
on the weight redistribution and performance.}
\label{tab:strategy_ablation}
\scriptsize
\begin{tabular}{@{}lccccc@{}}
\toprule
\textbf{Configuration} & \multicolumn{4}{c}{\textbf{Strategy Weights}} & \textbf{Test} \\
\cmidrule(lr){2-5}
 & $S_U$ & $S_D$ & $S_C$ & $S_B$ & \textbf{Accuracy (\%)} \\
\midrule
Base (All Strategies) & 0.24 & 0.29 & 0.23 & 0.24 & 89.03 \\
Without Uncertainty ($S_U$) & - & 0.38 & 0.31 & 0.31 & 86.71 (-2.32) \\
Without Diversity ($S_D$) & 0.33 & - & 0.38 & 0.33 & 85.18 (-3.85) \\
Without Class Balance ($S_C$) & 0.31 & 0.38 & - & 0.31 & 87.25 (-1.78) \\
Without Boundary ($S_B$) & 0.31 & 0.38 & 0.31 & - & 88.46 (-0.57) \\
\bottomrule
\end{tabular}
\end{table}

\begin{table}[t]
\centering
\caption{Emergent curriculum patterns. Values show dominant strategy weights by training stage.}
\label{tab:curriculum}
\scriptsize
\begin{tabular}{l|ccc|ccc}
\toprule
& \multicolumn{3}{c|}{Early Stage (epochs 1-15)} & \multicolumn{3}{c}{Late Stage (epochs 36-50)} \\
Budget & Diversity & Class Bal. & Uncertainty & Uncertainty & Boundary & Diversity \\
\midrule
10\% & 0.200 & 0.187 & 0.162 & \textbf{0.247} & 0.233 & 0.120 \\
30\% & 0.200 & 0.187 & 0.162 & \textbf{0.247} & 0.233 & 0.120 \\
50\% & 0.200 & 0.187 & 0.162 & \textbf{0.247} & 0.233 & 0.120 \\
\bottomrule
\end{tabular}
\end{table}

\begin{figure}[t]
\centering
\includegraphics[width=0.45\linewidth]{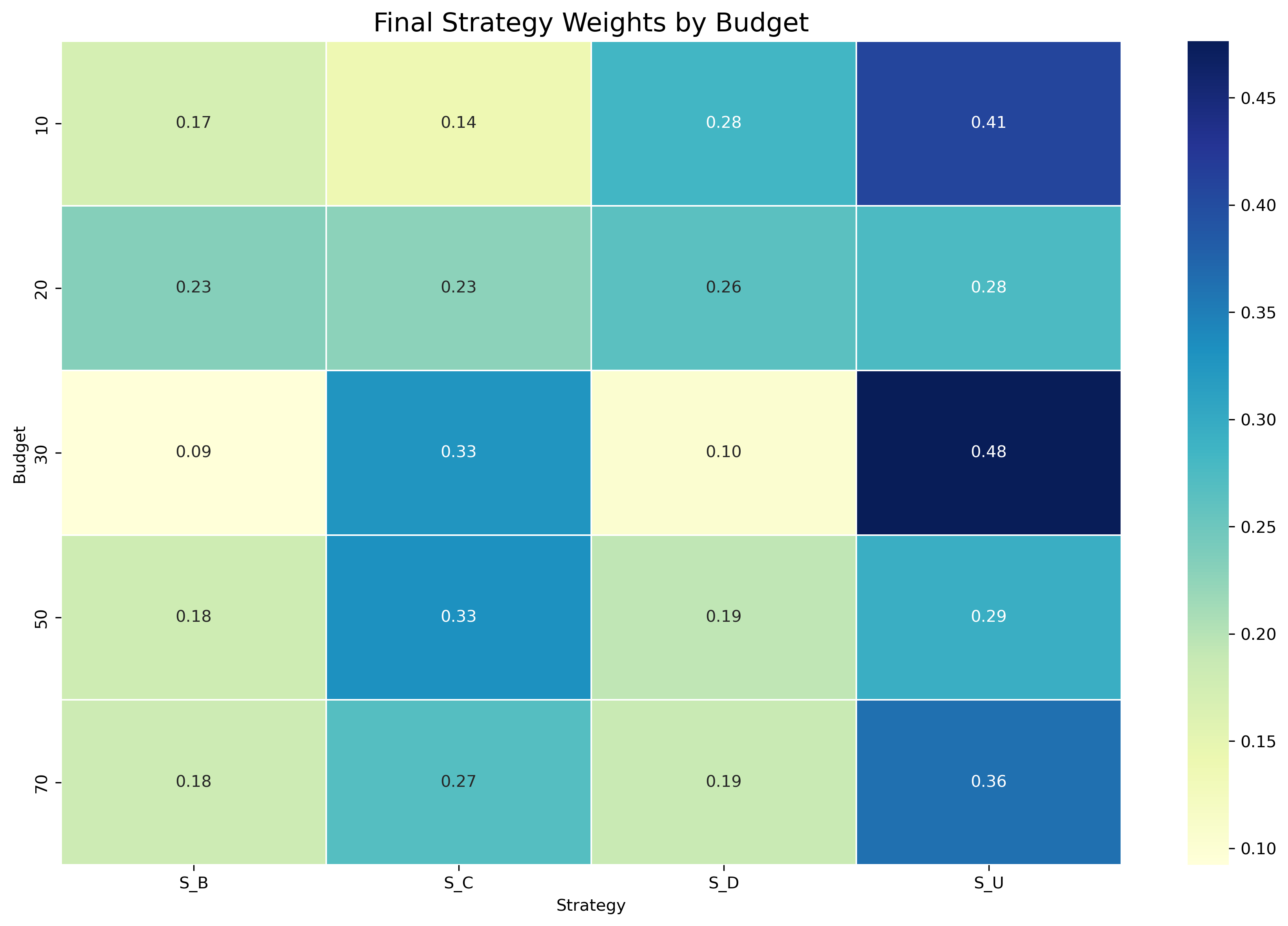}
\caption{Final strategy weight distribution across different budget constraints. The heatmap shows how \mode adaptively allocates importance to different strategies based on available resources}
\label{fig:strategy_heatmap}
\end{figure}

\textbf{Emergent Curriculum Learning Behavior}
A notable finding is that \mode inherently applies curriculum learning principles without being explicitly designed to do so. Tracking strategy weights reveals clear priority patterns in data characteristics during training.
Table~\ref{tab:curriculum} measures curriculum behavior across budgets.
Three learning phases consistently emerge at any budget level:(i)
\emph{Foundation Building (Early Stage):} MODE prioritizes diversity (20\%) and class balance (18.7\%) to establish broad feature coverage and ensure all classes are represented. This aligns with curriculum learning principles of starting with "easy" examples that provide clear learning signals.
(ii) \emph{Representation Refinement (Middle Stage):} Strategies become more balanced (15.8-18.7\% each) as different aspects of the data are explored. The increased uncertainty weight (18.5\%) suggests handling more challenging examples.
(iii) \emph{Decision Boundary Optimization (Late Stage):} Uncertainty (24.7\%) and boundary sampling (23.3\%) dominate, focusing on the hardest examples near decision boundaries. Diversity weight drops to 12\%, indicating diminished returns from exploring new feature regions.
This emergent curriculum is consistent: transition rates remain stable (0.30–0.32) across budgets, indicating an inherent property of the learning dynamics rather than a budget-dependent artifact. The meta-controller learns that different training stages require different data characteristics.
The curriculum patterns yield insights beyond accuracy gains. The sharp drop in diversity weight after 30\% budget suggests that practitioners can emphasize diverse sampling early and then switch to targeted methods. The consistent early reliance on class balance across budgets shows that stratified initialization is crucial regardless of total budget—a result validated by a 15\% performance drop when stratified sampling is omitted. These weight trajectories also support debugging: anomalous patterns (e.g., persistently high late-stage class-balance weight) can reveal data issues such as label noise or severe class imbalance.

\textbf{Budget Constraints on Strategy Selection.}
We examined how budget constraints affect strategy selection. Figure~\ref{fig:strategy_heatmap} shows \mode's capability to allocate strategy weights based on available computational resources. With limited budgets (10-30\%), the framework prioritizes uncertainty sampling ($S_U$), using weights up to 0.48 for maximum information gain. As resources increase (50-70\%), \mode balances weights, focusing on class balance ($S_C$) and diversity ($S_D$) for comprehensive dataset coverage. This flexibility ensures consistent performance across budgets, ideal for real-world applications with variable resources. For detailed analysis, see Appendix~\ref{app:budget_constraints}. Our implementation treats scoring strategies independently; however, sensitivity analysis shows complex interactions. Removing a strategy reveals redundancies and synergies, suggesting a meta-controller could enhance performance (see Appendix~\ref{app:strategy_ablation},~\ref{app:stratergy_selection} for more details).

\textbf{Exploration-Exploitation Balance}
The temperature parameter in \mode controls the exploration-exploitation balance. Figure \ref{fig:temp_evolution_full} shows the evolution of the temperature parameter across selection rounds for different budget constraints. With limited resources (10\% budget), temperature drops rapidly, indicating a quick transition to exploitation. In contrast, higher budgets (50\%) maintain elevated temperatures longer, enabling prolonged exploration. These patterns demonstrate \mode adaptability: with scarce resources, it quickly focuses on promising strategies; with abundant resources, it maintains broader exploration. Further details are provided in Appendix~\ref{app:exploration-exploitation}

\textbf{Efficiency Analysis}
\label{ablation:efficientcy}
Table~\ref{tab:efficiency} shows that MODE requires 3h 20m for selection versus GLISTER's 2h 15m—a 47\% increase from multi-objective scoring. However, this yields 90\% faster training (0.5h vs. 5h) and 75\% less memory (3.2GB vs. 12.8GB). Our implementation already caches diversity scores and class frequencies, reducing redundant computations by ~30\%. Future work will explore LSH-based diversity approximation and closed-form weight updates to match single-strategy selection times while preserving adaptive benefits.
%
\mode demands keeping strategy-specific scores for samples, leading to memory complexity of $O(|\mathcal{F}| \cdot |\mathcal{D}|)$. For larger datasets, this may exceed memory limits. We aim to explore online methods to handle data in chunks and create streaming algorithms to reduce memory usage.

\begin{figure}[t]
\centering
\begin{subfigure}[b]{0.32\textwidth}
    \includegraphics[width=\textwidth]{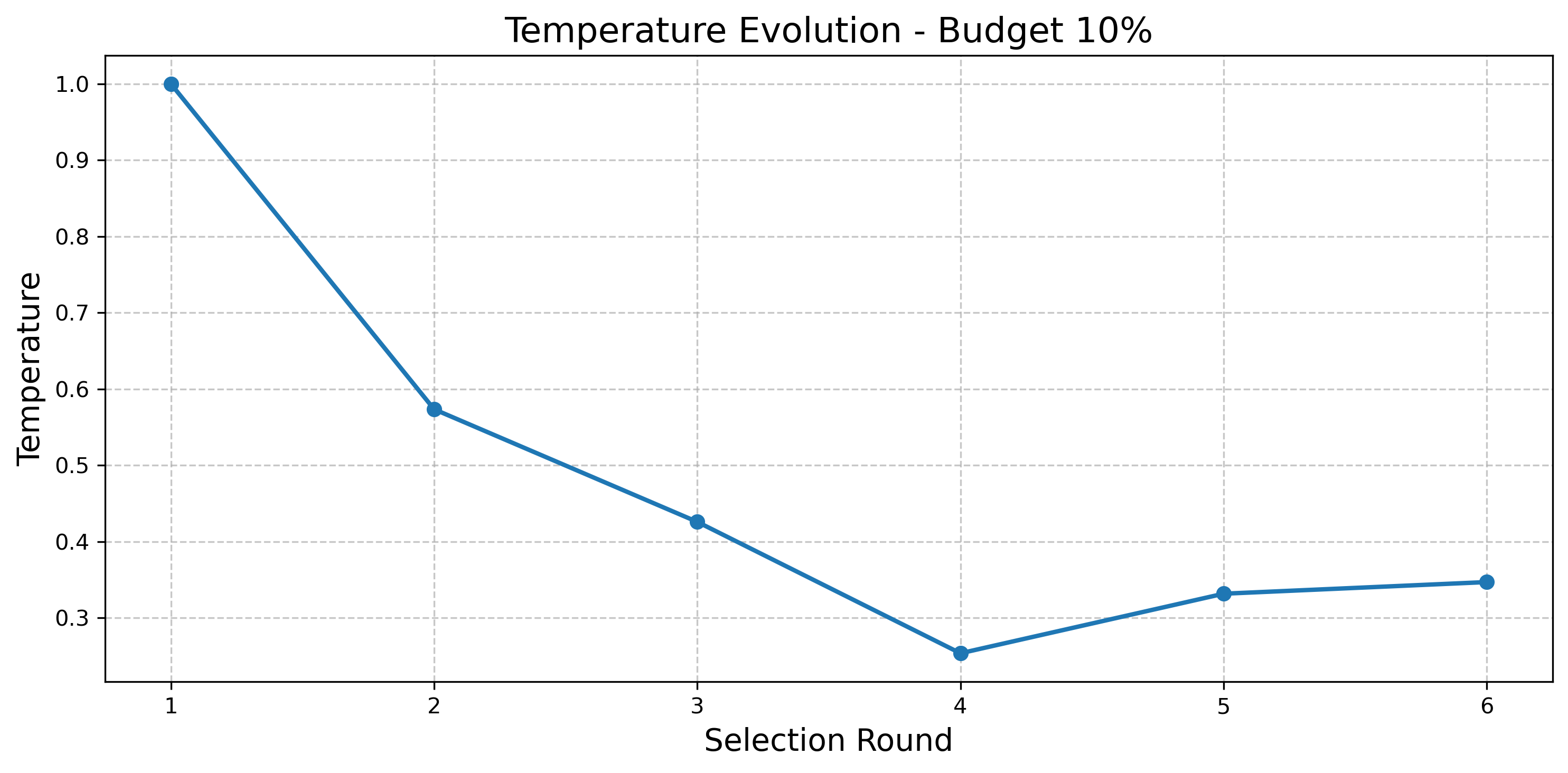}
    \caption{Budget: 10\%}
    \label{fig:temp_10}
\end{subfigure}
\hfill
\begin{subfigure}[b]{0.32\textwidth}
    \includegraphics[width=\textwidth]{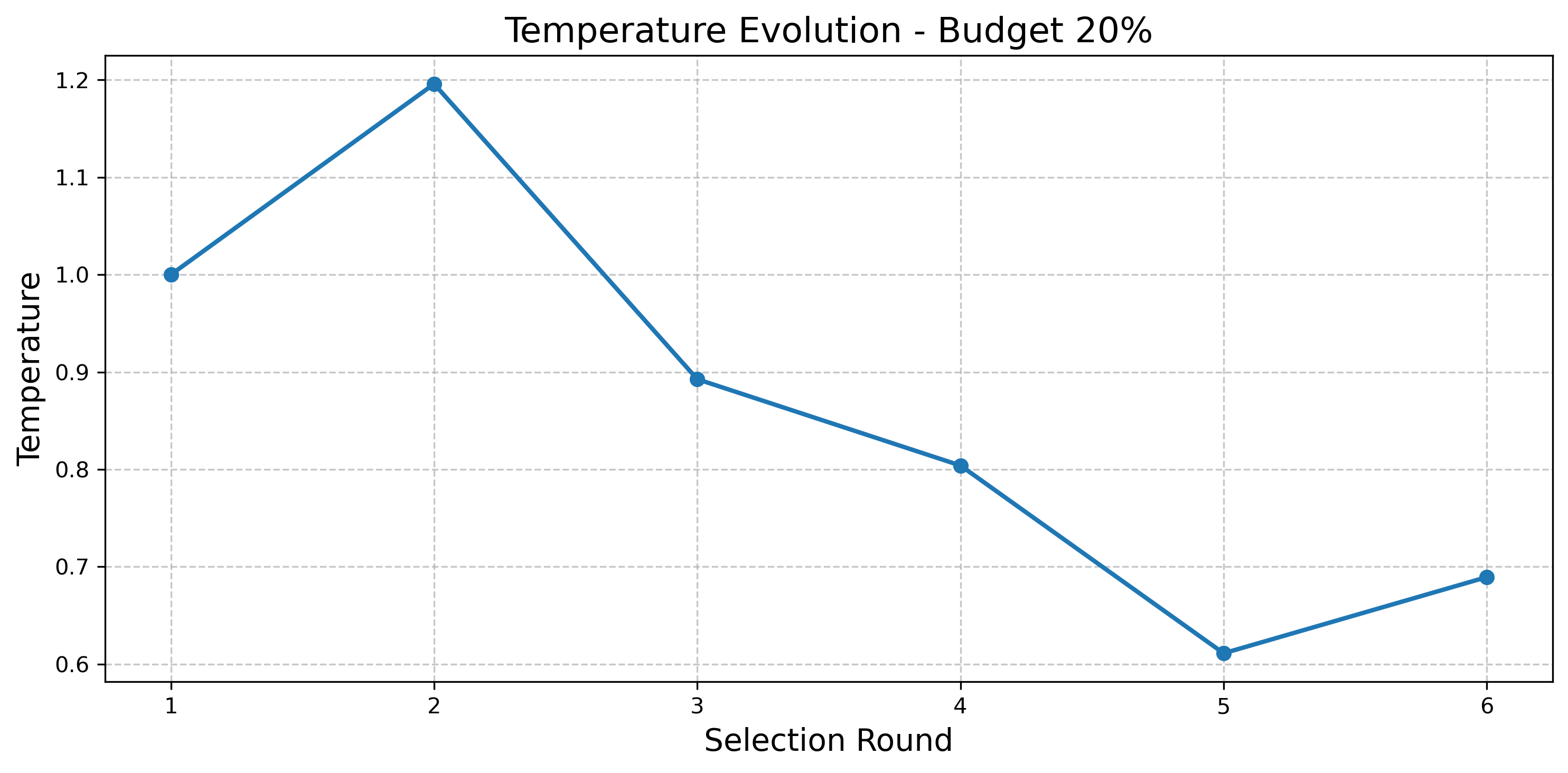}
    \caption{Budget: 20\%}
    \label{fig:temp_20}
\end{subfigure}
\hfill
\begin{subfigure}[b]{0.32\textwidth}
    \includegraphics[width=\textwidth]{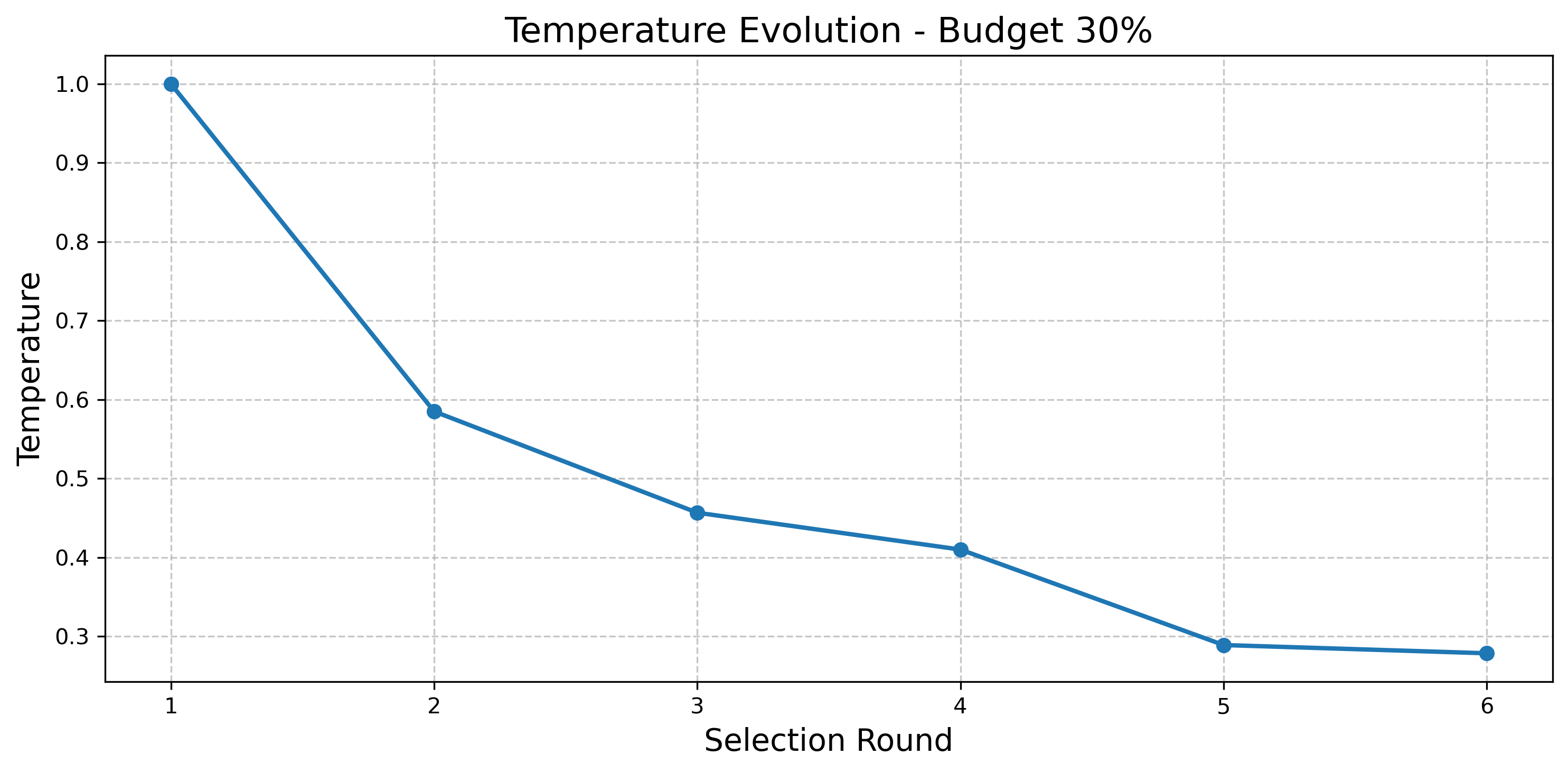}
    \caption{Budget: 30\%}
    \label{fig:temp_30}
\end{subfigure}

\begin{subfigure}[b]{0.32\textwidth}
    \includegraphics[width=\textwidth]{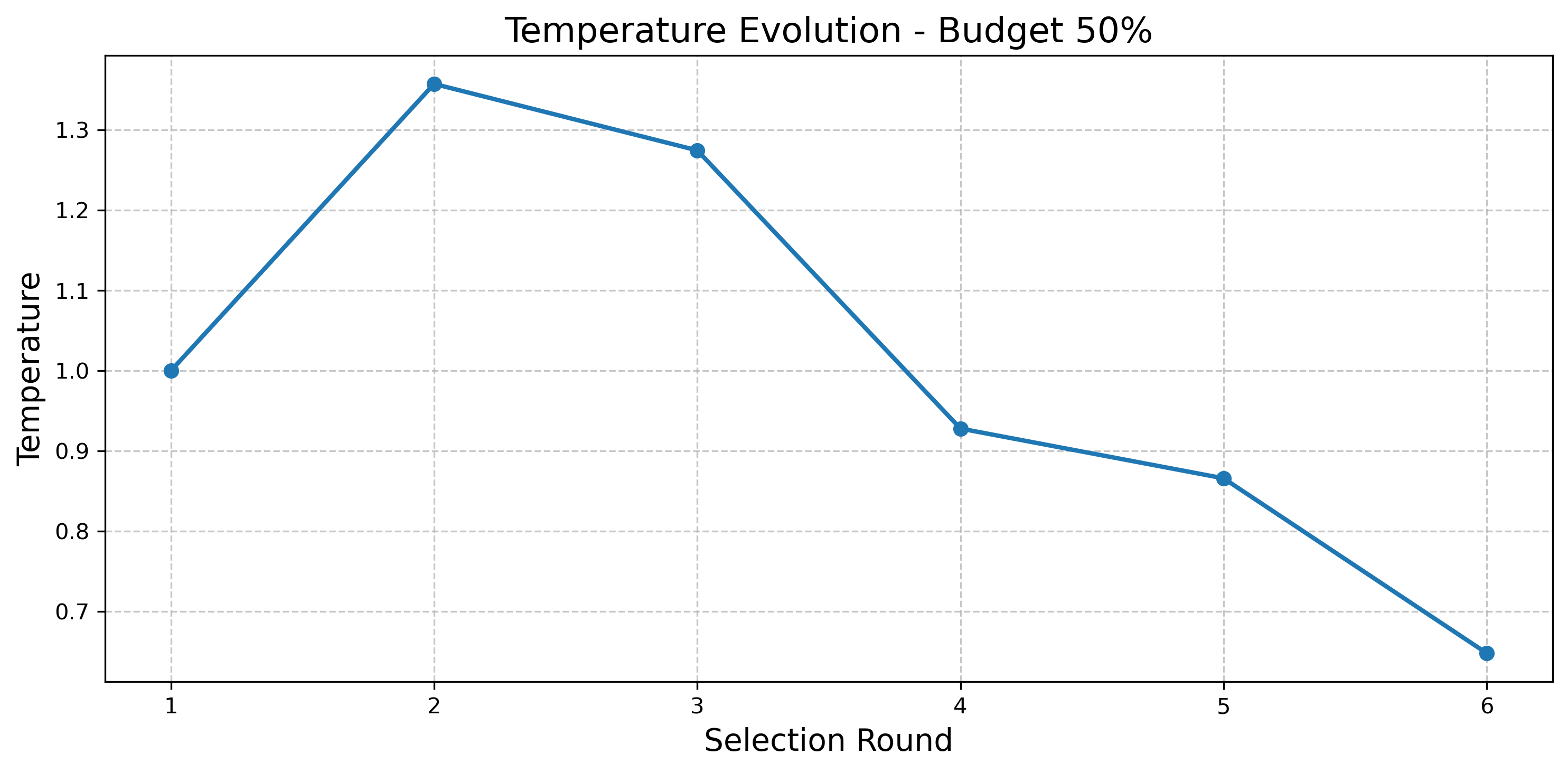}
    \caption{Budget: 50\%}
    \label{fig:temp_50}
\end{subfigure}
\hfill
\begin{subfigure}[b]{0.32\textwidth}
    \includegraphics[width=\textwidth]{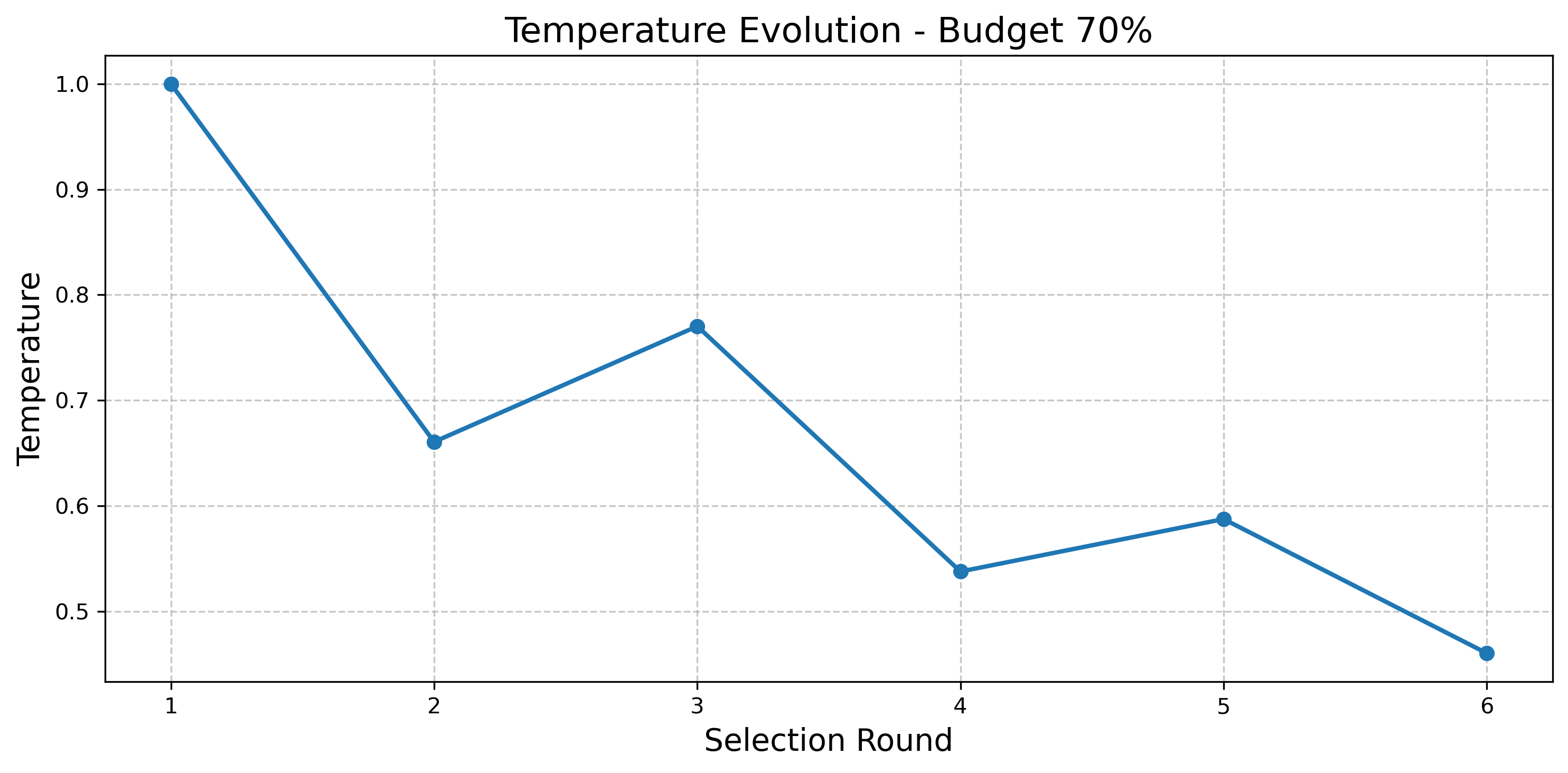}
    \caption{Budget: 70\%}
    \label{fig:temp_70}
\end{subfigure}
\caption{Temperature evolution for varying budgets, balancing exploration and exploitation.}
\label{fig:temp_evolution_full}
\end{figure}

\textbf{Hyperparameter Robustness.}
Our hyperparameter sensitivity analysis demonstrates that MODE is robust to reasonable variations, with performance remaining within 2-3\% of optimal configurations across a wide range of settings. The most sensitive parameter is the temperature decay rate, while uniform initialization (0.25 for each strategy) consistently leads to the most stable convergence, still allowing sufficient flexibility for adaptation. Further details are provided in Appendix~\ref{app:hyperparameter_sensitivity}.

\section{Related Work}
\label{sec:related_work}
The development of coreset methods has evolved through several key stages. Early efforts emphasized geometric strategies such as \textit{k}-Center Greedy \citep{DBLP:conf/iclr/SenerS18} and herding \citep{10.1145/1553374.1553517}, which aim to maximize feature space coverage by selecting representative or diverse points. Gradient-based techniques such as CRAIG \citep{Mirzasoleiman2019DataSF} and Gradient Matching \citep{pmlr-v139-killamsetty21a} later emerged to select subsets that best approximate full-dataset gradients. These approaches significantly improved data efficiency but typically optimize a single criterion. 
Recent approaches have extended this line of work. GLISTER \citep{Killamsetty2020GLISTERGB} formalizes gradient similarity within a subset selection framework, while RETRIEVE \citep{Killamsetty2020GLISTERGB} incorporates bi-level optimization, focusing on reweighting samples. 
BADGE \citep{Ash2019DeepBA} proposes a hybrid strategy based on model uncertainty and gradient embeddings, using $k$-Means++ in the gradient space. However, these methods fundamentally operate over static objectives and are not designed to dynamically shift their selection criteria over time.
Information-theoretic methods like InfoCore \citep{Sun2022InformationtheoreticOM} and PRISM \citep{Iyer2021PRISMAR} propose selecting data to maximize mutual information or structured submodular objectives, while CORDS \citep{Killamsetty2021CORDS} offers a benchmarking library for coreset techniques. While these frameworks enhance theoretical robustness and comparability, they generally treat data selection as a one-shot or fixed-process optimization, rather than as an adaptive system.

\mode also relates to ideas in curriculum learning, which proposes that training with samples of gradually increasing difficulty can accelerate learning \citep{10.1145/1553374.1553380, Soviany2021CurriculumLA}. Active learning, which selects the most informative unlabeled samples for annotation \citep{Lewis1994ASA,pmlr-v16-settles11a}, also informs \mode’s focus on informativeness—though our goal is label-efficient training, not annotation efficiency. Meta-learning, or "learning to learn" \citep{Hospedales2020MetaLearningIN}, is particularly relevant. Prior work such as MAML \citep{Finn2017ModelAgnosticMF} and Reptile \citep{Nichol2018OnFM} optimizes learning procedures across tasks, while \citep{Konyushkova2017LearningAL} applies meta-learning to learn active learning policies. Our work builds on these ideas by employing a meta-controller that adapts data selection strategies in response to real-time feedback during training.

While methods like GLISTER \cite{Killamsetty2020GLISTERGB}, CRAIG \cite{Mirzasoleiman2019DataSF}, BADGE \cite{Ash2019DeepBA}, CREST \cite{crest2023} and GradMatch \cite{pmlr-v139-killamsetty21a} offer valuable insights, they rely on static, single-objective selection criteria and lack adaptability during training. \mode overcomes these limitations by dynamically combining multiple strategies—uncertainty, diversity, class balance, and boundary proximity—based on empirical performance. This adaptivity allows it to shift priorities over time, emphasizing uncertainty in early stages and diversity later on. As a result, \mode improves performance under strict data budgets while providing interpretable insights into the evolving utility of each strategy.


\section{Conclusion and Future Work}
\label{sec:conclusion}

We proposed \mode, a framework designed to enhance coreset selection through dynamic optimization. By leveraging diverse and adaptive sampling strategies, \mode efficiently selects representative subsets from large datasets while preserving strong performance across various multiclass classification tasks. Our experimental results demonstrate its effectiveness in refining learning trajectories and optimizing selection processes, even under strict budget constraints.
Our findings suggest that carefully curated selection objectives can significantly influence model performance, underscoring the importance of balancing efficiency, diversity, and accuracy in data-driven decision-making.

\section{Limitations}
\label{sec:limitations}
\mode incurs approximately 47\% longer selection time than single-strategy methods (e.g., GLISTER) due to multi-objective scoring, with memory complexity $O(|\mathcal{F}| \cdot |\mathcal{D}|)$ potentially limiting massive-scale datasets. Cold-start challenges arise when model predictions for uncertainty and boundary scoring ($S_U$, $S_B$) are unreliable in early training. Current implementation treats strategies independently despite complex interactions; modeling strategy synergies could enhance adaptation. Future work includes lightweight scoring approximations, streaming variants for online processing, and adaptive exploration schedules tailored to budget constraints. Despite these limitations, \mode demonstrates consistent advantages across datasets and budgets, validating adaptive multi-strategy coreset selection as a practical and promising direction.

\section*{Acknowledgment}
This work was supported by ANR-22-CE23-0002 ERIANA and ANR-19-CHIA-0005-01 EXPEKCTATION funded by the French National Research Agency.

\bibliography{reference.bib}
\newpage
\appendix
\section*{Appendix}

\section{Summary of Notations}
\label{app:notation}

Table \ref{tab:notation} provides a summary of the notations used in our work.
\begin{table}[t]
\centering
\caption{Notation Summary for MODE Framework }
\label{tab:notation}
\small  
\begin{tabular}{@{}>{\ttfamily}l>{\raggedright\arraybackslash}p{0.7\textwidth}@{}}
\toprule
\multicolumn{1}{c}{\textbf{Symbol}} & \textbf{Description} \\ 
\midrule
$S_j(\mathbf{x}_i)$ & Raw score from strategy $j$ for sample $\mathbf{x}_i$ \\
$\hat{S}_j(\mathbf{x}_i)$ & Normalized strategy score ($\hat{S}_j = S_j/\max_k S_j(\mathbf{x}_k)$) \\
$\alpha_{t,j}$ & Meta-attention weight for strategy $j$ at training step $t$ \\
$\beta_j(t)$ & Time-dependent strategy effectiveness parameter \\
$\phi_i$ & Final selection score for $\mathbf{x}_i$ (after refinement) \\
$\tau(t)$ & Annealing temperature controlling selection sharpness \\
$h_\phi$ & Strategy weighting network with parameters $\phi$ \\
$s_t$ & Training state vector at time $t$ \\
$\mathbf{w}_t$ & Strategy weights for scoring combination \\
$\tau_t$ & Temperature for softmax normalization \\
$\mathcal{C}_t$ & Coreset selected at step $t$ \\
$\mathcal{F}$ & Scoring strategies $\{S_U, S_D, S_C, S_B\}$ \\
$\eta_t$ & Learning rate modulated by budget \\
$b_t$ & Remaining budget ratio at time $t$ \\
$\gamma_i$ & Refinement signal for adaptive weighting \\
\bottomrule
\end{tabular}
\end{table}


\section{Convergence of Importance Weights}
\label{sec:importance-weights}

\subsection{Weight Update Rule}
\label{app:weight-update}
The importance weights $w_j(t)$ for each strategy $j$ are updated using a temperature-controlled softmax:
\begin{equation}
    w_j(t) = \frac{\exp(\alpha_j(t) / \tau(t))}{\sum_k \exp(\alpha_k(t) / \tau(t))}
\end{equation}
where $\alpha_j(t)$ represents the empirical effectiveness of strategy $j$ at time $t$, and $\tau(t)$ is the temperature parameter controlling exploration vs. exploitation.

\subsection{Convergence Analysis}
\label{app:weight convergence analysis}
To show that the weights $w_j(t)$ converge, we analyze the update rule using tools from stochastic approximation and online learning.

\textbf{Assumptions:}
\begin{itemize}
    \item The rewards $r_j(t)$ (e.g., changes in accuracy, loss, or diversity) are bounded.
    \item The temperature $\tau(t)$ follows an annealing schedule, such as $\tau(t) = \frac{\tau_0}{\log(t+1)}$, ensuring that exploration decreases over time.
\end{itemize}

\textbf{Update Rule for $\alpha_j(t)$:}
The parameters $\alpha_j(t)$ are updated based on the performance of each strategy:
\begin{equation}
    \alpha_j(t+1) = \alpha_j(t) + \eta \cdot r_j(t)
\end{equation}
where $\eta$ is the learning rate, and $r_j(t)$ is the reward for strategy $j$ at time $t$.

\textbf{Convergence Proof:}
\begin{itemize}
    \item \textbf{Bounded Rewards:} Since the rewards $r_j(t)$ are bounded, the updates to $\alpha_j(t)$ are also bounded.
    \item \textbf{Annealing Temperature:} As $t \to \infty$, $\tau(t) \to 0$, which means the softmax distribution becomes increasingly concentrated on the strategy(s) with the highest $\alpha_j(t)$.
    \item \textbf{Stochastic Approximation:} The update rule for $\alpha_j(t)$ can be viewed as a stochastic approximation algorithm, which converges under mild conditions (e.g., Robbins-Monro conditions).
\end{itemize}

\textbf{Result:} As $t \to \infty$, the weights $w_j(t)$ converge to a distribution that prioritizes the most effective strategies. Specifically:
\begin{itemize}
    \item If strategy $j$ consistently achieves high rewards, $w_j(t)$ converges to 1.
    \item If strategy $j$ performs poorly, $w_j(t)$ converges to 0.
\end{itemize}


\section{Algorithm}
\label{app:alg}
Algorithm~\ref{alg:MODE} presents our framework, incorporating a meta-controller that dynamically adjusts selection strategies. The process begins with uniform initialization of strategy weights (L~\ref{line:init-weights}) across all selection criteria, followed by creating an initial coreset through stratified sampling (L~\ref{line:init-coreset}) to ensure balanced class representation.

The iterative selection loop (L~\ref{line:main-loop}) continues until reaching the budget constraint, employing four complementary scoring functions that the meta-controller adaptively combines:
\begin{itemize}
   \item \textbf{Uncertainty score} (L~\ref{line:uncertainty}) quantifies model entropy, targeting samples where predictions lack confidence
   \item \textbf{Diversity score} (L~\ref{line:diversity}) measures feature-space distance to existing coreset samples, preventing redundancy
   \item \textbf{Class balance score} (L~\ref{line:class-balance}) addresses data imbalance through inverse frequency weighting
   \item \textbf{Boundary score} (L~\ref{line:boundary}) identifies samples near decision boundaries using top prediction margins
\end{itemize}

Score normalization (L~\ref{line:normalization}) ensures fair comparison across strategies with different distributions. The meta-controller's key innovation appears in the strategy evaluation phase (L~\ref{line:strategy-eval}), where it measures the performance improvement each strategy would provide, generating crucial reward signals for adaptation.
The controller's temperature parameter (L~\ref{line:temperature}) governs exploration-exploitation, decreasing as a function of budget consumption and training progress. The adaptive mechanism updates strategy weights using a temperature-controlled softmax (L~\ref{line:weight-update}), where higher rewards lead to increased strategy importance, while the blending factor maintains stability during transitions.

For selection, the controller computes a weighted combination score (L~\ref{line:combined-score}) integrating all strategies according to their learned weights. Top-scoring samples (L~\ref{line:sample-selection}) are added to the coreset (L~\ref{line:update-coreset}) before model retraining (L~\ref{line:retrain}). The learning state update after each round provides the meta-controller with evolving performance metrics, enabling it to continuously optimize its strategy weights based on empirical effectiveness at different training stages.

\label{mode-algorithm}
\begin{algorithm}[t]
\caption{\mode: Multi-Objective Adaptive Coreset Selection}
\label{alg:MODE}
\begin{algorithmic}[1]
\Require Full dataset $\mathcal{D}$, budget $B$, initial temperature $\tau_0$
\Ensure Selected coreset $\mathcal{C}$, trained model $f_\theta$
\State Define unlabeled pool $\mathcal{U} \leftarrow \mathcal{D}$ \Comment{Initially, all data is unlabeled}
\State Initialize strategy weights $\mathbf{w} = [w_U, w_D, w_C, w_B] \leftarrow [0.25, 0.25, 0.25, 0.25]$ \label{line:init-weights}
\State Initialize coreset $\mathcal{C}$ via stratified sampling (e.g., 10\% of budget $B$) \label{line:init-coreset}
\State Update unlabeled pool $\mathcal{U} \leftarrow \mathcal{U} \setminus \mathcal{C}$ \Comment{Remove selected samples}
\State Train initial model $f_\theta$ on coreset $\mathcal{C}$
\While{$|\mathcal{C}| < B$} \Comment{Continue until budget is reached} \label{line:main-loop}
    \State // Compute strategy scores for each sample in unlabeled pool \label{line:score-computation}
    \For{each $\mathbf{x}_i$ in $\mathcal{U}$}
        \State $S_U(\mathbf{x}_i) \leftarrow -\sum_{c} P(y=c|\mathbf{x}_i) \log P(y=c|\mathbf{x}_i)$ \Comment{Uncertainty score} \label{line:uncertainty}
        \State $S_D(\mathbf{x}_i) \leftarrow \min_{\mathbf{x}_j \in \mathcal{C}} \|\phi(\mathbf{x}_i) - \phi(\mathbf{x}_j)\|_2$ \Comment{Diversity score} \label{line:diversity}
        \State $S_C(\mathbf{x}_i) \leftarrow \frac{1}{f_{c(\mathbf{x}_i)}}$ \Comment{Class balance score} \label{line:class-balance}
        \State $S_B(\mathbf{x}_i) \leftarrow 1 - (P(\hat{y}_1|\mathbf{x}_i) - P(\hat{y}_2|\mathbf{x}_i))$ \Comment{Boundary score} \label{line:boundary}
    \EndFor
    
    \State // Normalize scores for fair comparison \label{line:normalization}
    \For{each strategy $j \in \{U, D, C, B\}$}
        \State $\hat{S}_j(\mathbf{x}) \leftarrow \frac{S_j(\mathbf{x}) - \min_{\mathbf{x}' \in \mathcal{U}} S_j(\mathbf{x}')}{\max_{\mathbf{x}' \in \mathcal{U}} S_j(\mathbf{x}') - \min_{\mathbf{x}' \in \mathcal{U}} S_j(\mathbf{x}')}$ 
    \EndFor
    
    \State // Evaluate strategy effectiveness through validation \label{line:strategy-eval}
    \For{each strategy $j \in \{U, D, C, B\}$}
        \State Select temporary subset $\mathcal{T}_j$ of top-$k$ samples according to strategy $j$
        \State Measure performance $p_j$ on validation set after adding $\mathcal{T}_j$ to $\mathcal{C}$
        \State $r_j \leftarrow p_j - p_{base}$ \Comment{Performance gain relative to baseline}
    \EndFor
    
    \State // Update strategy weights based on performance
    \State Calculate temperature $\tau_t \leftarrow \tau_0 \cdot \exp(-\alpha(1-b_t)) \cdot \exp(-\beta \cdot \frac{e_t}{E_{max}})$ \label{line:temperature}
    \For{each strategy $j \in \{U, D, C, B\}$}
        \State $\hat{\alpha}_j \leftarrow \exp((1 + \gamma r_j)/\tau_t) / \sum_{k} \exp((1 + \gamma r_k)/\tau_t)$ \label{line:weight-update}
        \State $w_j \leftarrow (1-\delta) \cdot w_j + \delta \cdot \hat{\alpha}_j$ \Comment{Blend old and new weights}
    \EndFor
    \State Normalize $\mathbf{w}$ to sum to 1
    
    \State // Compute combined scores for all unlabeled samples
    \For{each $\mathbf{x}_i$ in $\mathcal{U}$}
        \State $S_{MODE}(\mathbf{x}_i) \leftarrow \sum_{j \in \{U,D,C,B\}} w_j \cdot \hat{S}_j(\mathbf{x}_i)$ \label{line:combined-score}
    \EndFor
    
    \State // Select samples for this round \label{line:sample-selection}
    \State Determine selection size $n$ for current round (e.g., 10\% of remaining budget)
    \State Select $\mathcal{S} \leftarrow$ top-$n$ samples from $\mathcal{U}$ according to $S_{MODE}$ scores
    \State Update coreset: $\mathcal{C} \leftarrow \mathcal{C} \cup \mathcal{S}$ \label{line:update-coreset}
    \State Update unlabeled pool: $\mathcal{U} \leftarrow \mathcal{U} \setminus \mathcal{S}$
    
    \State Retrain model $f_\theta$ on updated coreset $\mathcal{C}$ \label{line:retrain}
    \State Update learning state (epoch progress, accuracy, budget ratio, etc.)
\EndWhile
\State \Return Coreset $\mathcal{C}$, trained model $f_\theta$
\end{algorithmic}
\end{algorithm}


\section{Algorithm with Selective Recomputation}
\label{app:cached_alg}
Algorithm~\ref{alg:MODE_cached} presents our optimized framework incorporating selective recomputation for efficiency. The key innovation lies in leveraging the distinct computational dependencies of our scoring strategies to minimize redundant calculations through strategic caching.

Our scoring strategies exhibit three distinct computational dependencies:
\begin{itemize}
    \item \textbf{Model-dependent scores} ($S_U$, $S_B$): Only invalidated after model retraining, enabling persistent caching across selection rounds
    \item \textbf{Coreset-dependent scores} ($S_D$): Only require updates for new coreset interactions, avoiding full recomputation
    \item \textbf{Distribution-dependent scores} ($S_C$): Updated incrementally using running statistics
\end{itemize}

This selective approach reduces computational complexity from $O(|\mathcal{U}| \cdot |\mathcal{C}_t|)$ to $O(|\mathcal{U}| \cdot |\mathcal{B}|)$ per round, where $|\mathcal{B}|$ is the batch size and typically $|\mathcal{B}| \ll |\mathcal{C}_t|$.

\begin{algorithm}[t]
\caption{\protect\mode with Selective Recomputation: Multi-Objective Adaptive Coreset Selection (Part I)}
\label{alg:MODE_cached}
\small
\begin{algorithmic}[1]
\Require Full dataset $\mathcal{D}$, budget $B$, initial temperature $\tau_0$
\Ensure Selected coreset $\mathcal{C}$, trained model $f_\theta$

\State Define unlabeled pool $\mathcal{U} \leftarrow \mathcal{D}$
\State Initialize strategy weights $\mathbf{w} = [0.25, 0.25, 0.25, 0.25]$
\State Initialize coreset $\mathcal{C}$ via stratified sampling
\State Update $\mathcal{U} \leftarrow \mathcal{U} \setminus \mathcal{C}$
\State Train initial model $f_\theta$ on $\mathcal{C}$

\State Initialize score caches $\text{Cache}_{S_U}$, $\text{Cache}_{S_B}$, diversity matrix $\mathbf{D}$, and distribution stats $\mu_0, \Sigma_0, n_0$
\State $v_{model} \leftarrow 1$ \Comment{Model version counter}

\While{$|\mathcal{C}| < B$}
    \State Determine batch size $n$ for current round
    \State \textbf{// Model-dependent scores (cached until retrain)}
    \If{$\text{Cache}_{S_U}$ empty or model retrained}
        \For{each $\mathbf{x}_i \in \mathcal{U}$}
            \State Compute $S_U(\mathbf{x}_i)$, $S_B(\mathbf{x}_i)$
            \State Store in caches
        \EndFor
    \Else
        \State Retrieve cached scores for all $\mathbf{x}_i \in \mathcal{U}$
    \EndIf
    \State \textbf{// Coreset-dependent scores (selective update)} 
    \For{each $\mathbf{x}_i \in \mathcal{U}$}
        \State Update $S_D(\mathbf{x}_i)$ using diversity matrix
    \EndFor
    \EndWhile
\end{algorithmic}
\end{algorithm}

\begin{algorithm}[t]\ContinuedFloat
\caption{\mode with Selective Recomputation: Multi-Objective Adaptive Coreset Selection (Part II)}
\small
\begin{algorithmic}[1]
    \State \textbf{// Distribution-dependent scores (incremental update)} 
    \If{first iteration}
        \For{each $\mathbf{x}_i \in \mathcal{U}$}
            \State $S_C(\mathbf{x}_i) \leftarrow 1/f_{c(\mathbf{x}_i)}$
        \EndFor
    \Else
        \State Update frequencies with $\mathcal{S}_{prev}$, recompute $S_C$
    \EndIf

    \State Normalize all scores $\hat{S}_j(\mathbf{x})$ across strategies
    \State Evaluate strategies, update weights $w_j$ via softmax
    \State Compute final score $S_{MODE}(\mathbf{x}_i) = \sum_j w_j \hat{S}_j(\mathbf{x}_i)$
    \State Select top-$n$ samples $\mathcal{S}$, update $\mathcal{C}$ and $\mathcal{U}$
    \State Retrain $f_\theta$ on updated $\mathcal{C}$, clear caches

\State \Return $\mathcal{C}$, trained model $f_\theta$
\end{algorithmic}
\end{algorithm}

\subsection{Complexity Analysis}

The selective recomputation strategy provides significant computational savings:

\begin{align}
\text{Per-round complexity} &= O(|\mathcal{U}|) + O(|\mathcal{U}| \cdot |\mathcal{B}|) + O(|\mathcal{U}|) \\
&= O(|\mathcal{U}| \cdot |\mathcal{B}|)
\end{align}

compared to the naive $O(|\mathcal{U}| \cdot |\mathcal{C}_t|)$, where $|\mathcal{C}_t|$ grows linearly with the number of rounds while $|\mathcal{B}|$ remains constant.

\subsection{Memory Management}

\begin{itemize}
\item \textbf{Score caches}: Hash tables indexed by sample identifiers, cleared after retraining
\item \textbf{Diversity matrix}: Sparse storage of minimum distances, updated incrementally  
\item \textbf{Distribution statistics}: Running class frequencies, $O(C)$ space where $C$ is number of classes
\end{itemize}

\begin{table}[t]
\centering
\scriptsize
\caption{Efficiency comparison shows selection overhead is negligible compared to training benefits.}
\label{tab:efficiency}
\begin{tabular}{lcccc}
\toprule
Method & Selection (s) & Training (s) & Memory (GB) & Total Time (s) \\
\midrule
Random & 0.3 & 482 & 12.8 & 482.3 \\
GLISTER & 3.4 & 387 & 11.2 & 390.4 \\
MODE & 5.0 & 48 & 3.2 & 53.0 \\
MODE (cached) & 2.7 & 48 & 3.2 & 50.7 \\
\bottomrule
\end{tabular}
\end{table}

\section{Theoretical Analysis}
\label{app:theory}

This appendix provides the complete theoretical foundation for \mode. We establish three key results:
\begin{enumerate}
    \item \textbf{Submodularity Preservation:} Despite adaptive weighting, MODE maintains submodular structure (Section~\ref{app:submodular})
    \item \textbf{Approximation Guarantees:} MODE achieves $(1-1/e)$-approximation with finite-sample bounds (Section~\ref{app:approximation guarentee 1/e})
    \item \textbf{Convergence Properties:} Strategy weights converge to stable configurations (Section~\ref{app: statergy wegiht convergence})
\end{enumerate}

These results together ensure that \mode adaptive approach maintains theoretical guarantees while providing practical benefits.

\subsection{Mathematical Preliminaries}
\label{app:preliminaries}

\subsubsection{Notation}
Throughout this analysis, we use:
\begin{itemize}
    \item $\mathcal{D} = \{(x_i, y_i)\}_{i=1}^n$: Full dataset with $n$ samples
    \item $C \subseteq \mathcal{D}$: Selected coreset with budget constraint $|C| \leq B$
    \item $\mathcal{F} = \{S_U, S_D, S_C, S_B\}$: Set of scoring strategies
    \item $w_t \in \Delta^{|\mathcal{F}|-1}$: Strategy weights at time $t$
    \item $\phi: \mathcal{X} \rightarrow \mathbb{R}^d$: Feature representation function
\end{itemize}

\subsubsection{Submodularity Foundation}
\begin{definition}[Submodular Function]
\label{def:submodular}
A set function $f: 2^V \rightarrow \mathbb{R}$ is submodular if for all $A \subseteq B \subseteq V$ and $v \in V \setminus B$:
\begin{equation}
    f(A \cup \{v\}) - f(A) \geq f(B \cup \{v\}) - f(B)
\end{equation}
\end{definition}

This diminishing returns property is key to our analysis. Intuitively, it means adding an element to a smaller set provides at least as much benefit as adding it to a larger set.

\subsection{Submodularity of Individual Strategies}
\label{app:submodular}

We first establish that each scoring strategy in MODE is submodular. This forms the foundation for proving that their adaptive combination preserves theoretical guarantees.

\subsubsection{Diversity Score}
\begin{theorem}[Diversity is Submodular]
\label{thm:diversity_submodular}
The diversity score function:
\begin{equation}
    S_D(C) = \sum_{x \in \mathcal{D}} \max_{x' \in C} \text{sim}(\phi(x), \phi(x'))
\end{equation}
is monotone submodular.
\end{theorem}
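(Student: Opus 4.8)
The plan is to exploit the fact that $S_D$ is a \emph{facility-location} objective, i.e., a sum of pointwise maxima, and that both monotonicity and submodularity are preserved under nonnegative linear combinations of set functions. Writing $S_D(C) = \sum_{x \in \mathcal{D}} g_x(C)$ with $g_x(C) = \max_{x' \in C} \text{sim}(\phi(x), \phi(x'))$, and adopting the convention $\max_{x' \in \emptyset} = 0$ together with the assumption $\text{sim} \geq 0$, it suffices to show that each single-point term $g_x$ is monotone submodular; summing over $x \in \mathcal{D}$ with unit coefficients then transfers both properties to $S_D$.

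For monotonicity I would simply note that if $A \subseteq B$ then $g_x(B)$ maximizes $\text{sim}(\phi(x), \phi(\cdot))$ over a superset of the index set used for $g_x(A)$, so $g_x(A) \leq g_x(B)$ for every $x$, and hence $S_D(A) \leq S_D(B)$. The substantive step is submodularity, for which the key is an explicit marginal-gain formula. For any $C$ and $v \notin C$,
\begin{equation}
g_x(C \cup \{v\}) - g_x(C) = \max\bigl(\text{sim}(\phi(x), \phi(v)), g_x(C)\bigr) - g_x(C) = \bigl(\text{sim}(\phi(x), \phi(v)) - g_x(C)\bigr)^+,
\end{equation}
where $(z)^+ = \max(0, z)$. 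Now for $A \subseteq B$ and $v \notin B$, monotonicity gives $g_x(A) \leq g_x(B)$, and since the map $t \mapsto \bigl(\text{sim}(\phi(x), \phi(v)) - t\bigr)^+$ is non-increasing in $t$, we obtain
\begin{equation}
g_x(A \cup \{v\}) - g_x(A) = \bigl(\text{sim}(\phi(x), \phi(v)) - g_x(A)\bigr)^+ \geq \bigl(\text{sim}(\phi(x), \phi(v)) - g_x(B)\bigr)^+ = g_x(B \cup \{v\}) - g_x(B).
\end{equation}
This is exactly the diminishing-returns inequality of Definition~\ref{def:submodular} applied to $g_x$; summing over $x \in \mathcal{D}$ establishes it for $S_D$, completing the proof.

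I do not anticipate a genuine obstacle, as this is the standard facility-location argument. The only points needing care are bookkeeping: fixing the empty-set convention so that monotonicity holds starting from $C = \emptyset$, and making the nonnegativity of $\text{sim}$ explicit so that $S_D$ is well defined and nonnegative, a property the subsequent greedy $(1-1/e)$ guarantee in Theorem~\ref{thm:approximation} depends on. If $\text{sim}$ were allowed to take negative values, one could shift it by a constant without altering the marginal-gain structure in the displayed identities, so the argument is robust to that modeling choice.
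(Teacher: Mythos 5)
Your proof is correct and follows essentially the same route as the paper's: both treat $S_D$ as a facility-location function, verify monotonicity from the fact that a maximum over a superset can only grow, and establish diminishing returns by writing the marginal gain of $v$ as $\sum_{x}\bigl(\text{sim}(\phi(x),\phi(v)) - \max_{x'\in A}\text{sim}(\phi(x),\phi(x'))\bigr)^+$ and using that this expression is non-increasing in the subtracted term. Your explicit handling of the empty-set convention and the nonnegativity of $\text{sim}$ is a minor tightening of bookkeeping the paper leaves implicit, not a different argument.
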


\begin{proof}
We recognize $S_D$ as a facility location function. To prove submodularity, we verify both properties:

\textbf{Monotonicity:} For any $C \subseteq \mathcal{D}$ and $v \notin C$:
\begin{equation}
    S_D(C \cup \{v\}) = \sum_{x \in \mathcal{D}} \max_{x' \in C \cup \{v\}} \text{sim}(\phi(x), \phi(x')) \geq S_D(C)
\end{equation}
since the maximum can only increase when adding elements.

\textbf{Diminishing Returns:} For $A \subseteq B \subseteq \mathcal{D}$ and $v \notin B$, consider the marginal gain:
\begin{align}
    \Delta_A(v) &= S_D(A \cup \{v\}) - S_D(A)\\
    &= \sum_{x \in \mathcal{D}} \left[\max_{x' \in A \cup \{v\}} \text{sim}(\phi(x), \phi(x')) - \max_{x' \in A} \text{sim}(\phi(x), \phi(x'))\right]\\
    &= \sum_{x \in \mathcal{D}} \max\left\{0, \text{sim}(\phi(x), \phi(v)) - \max_{x' \in A} \text{sim}(\phi(x), \phi(x'))\right\}
\end{align}

Since $A \subseteq B$, for each $x$:
\begin{equation}
    \max_{x' \in A} \text{sim}(\phi(x), \phi(x')) \leq \max_{x' \in B} \text{sim}(\phi(x), \phi(x'))
\end{equation}

Therefore, each term in $\Delta_A(v)$ is at least as large as in $\Delta_B(v)$, proving $\Delta_A(v) \geq \Delta_B(v)$.
\end{proof}

\subsubsection{Weighted Combination Preserves Submodularity}
\label{app:wegihted combination convergence}
The key insight for MODE is that weighted combinations of submodular functions remain submodular:

\begin{theorem}[Weighted Combination]
\label{thm:weighted_combination}
If $f_1, \ldots, f_K: 2^V \rightarrow \mathbb{R}_+$ are submodular and $w_1, \ldots, w_K \geq 0$, then:
\begin{equation}
    F(S) = \sum_{i=1}^K w_i f_i(S)
\end{equation}
is submodular.
\end{theorem}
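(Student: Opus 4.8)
The plan is to work directly from the diminishing-returns characterization of submodularity given in Definition~\ref{def:submodular}, exploiting the fact that the marginal-gain operator is linear in the underlying set function. Fix arbitrary sets $A \subseteq B \subseteq V$ and an element $v \in V \setminus B$; the goal is to establish $F(A \cup \{v\}) - F(A) \geq F(B \cup \{v\}) - F(B)$, which is precisely the defining inequality for $F$.

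First I would introduce, for each $i \in \{1, \dots, K\}$, the marginal gains $\Delta_i^A(v) = f_i(A \cup \{v\}) - f_i(A)$ and $\Delta_i^B(v) = f_i(B \cup \{v\}) - f_i(B)$. Because each $f_i$ is submodular, Definition~\ref{def:submodular} yields the termwise inequality $\Delta_i^A(v) \geq \Delta_i^B(v)$ for every $i$. Since $F = \sum_i w_i f_i$, the marginal gain of a weighted sum of set functions equals the weighted sum of the individual marginal gains, so
\begin{equation}
F(A \cup \{v\}) - F(A) = \sum_{i=1}^K w_i \, \Delta_i^A(v), \qquad F(B \cup \{v\}) - F(B) = \sum_{i=1}^K w_i \, \Delta_i^B(v).
\end{equation}
Subtracting these and combining the terms gives
\begin{equation}
\bigl(F(A \cup \{v\}) - F(A)\bigr) - \bigl(F(B \cup \{v\}) - F(B)\bigr) = \sum_{i=1}^K w_i \bigl(\Delta_i^A(v) - \Delta_i^B(v)\bigr) \geq 0,
\end{equation}
which is exactly the submodularity of $F$.

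The one point that genuinely requires the hypotheses—and the only place the argument could break—is the nonnegativity of the weights: multiplying the termwise inequality $\Delta_i^A(v) \geq \Delta_i^B(v)$ by $w_i$ preserves its direction precisely because $w_i \geq 0$. Were any weight negative, that term would flip sign and the conclusion could fail, so I would invoke $w_i \geq 0$ explicitly rather than treating it as incidental. Everything else is linearity of summation, so no deeper obstacle arises; in particular the result does not need the $f_i$ to share any common structure beyond submodularity. Finally, if monotonicity were also desired (as the individual MODE strategies satisfy, e.g.\ Theorem~\ref{thm:diversity_submodular}), the identical nonnegative-combination argument applied to the single-element marginals $f_i(S \cup \{v\}) - f_i(S) \geq 0$ would immediately transfer monotonicity to $F$ as well.
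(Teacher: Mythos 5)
Your argument is correct and is essentially identical to the paper's proof: both expand the marginal gain of $F$ as the weighted sum of the individual marginal gains, apply the submodularity of each $f_i$ termwise, and use $w_i \geq 0$ to preserve the inequality under summation. Your version merely makes the role of the nonnegative weights more explicit, which is a fine clarification but not a different route.
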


\begin{proof}
For $A \subseteq B \subseteq V$ and $v \notin B$:
\begin{align}
    F(A \cup \{v\}) - F(A) &= \sum_{i=1}^K w_i [f_i(A \cup \{v\}) - f_i(A)]\\
    &\geq \sum_{i=1}^K w_i [f_i(B \cup \{v\}) - f_i(B)] \quad \text{(submodularity of each $f_i$)}\\
    &= F(B \cup \{v\}) - F(B)
\end{align}
\end{proof}

\begin{corollary}[MODE's Score is Submodular]
MODE's combined scoring function:
\begin{equation}
    S_{\text{MODE}}(C, t) = \sum_{j \in \mathcal{F}} w_{j,t} \cdot \hat{S}_j(C)
\end{equation}
is submodular for any fixed weight configuration $w_t$.
\end{corollary}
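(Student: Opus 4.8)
The plan is to treat this as an immediate consequence of Theorem~\ref{thm:weighted_combination}, so the entire task reduces to checking that the two hypotheses of that theorem are met for the specific functions appearing in $S_{\text{MODE}}$: (i) the mixing coefficients $w_{j,t}$ are nonnegative, and (ii) each normalized component $\hat{S}_j$ is a nonnegative submodular set function. Once both are verified, I would simply instantiate Theorem~\ref{thm:weighted_combination} with $K = |\mathcal{F}|$, $f_i = \hat{S}_i$, and $w_i = w_{i,t}$ to conclude submodularity of the combination for the fixed configuration $w_t$.

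For hypothesis (i), I would observe that the weights are produced by the temperature-controlled softmax of Section~\ref{subsec:strategy_weighting}, $w_{j,t} = \exp(\tilde w_{j,t}/\tau_t)/\sum_k \exp(\tilde w_{k,t}/\tau_t)$, so each $w_{j,t} > 0$ and in particular $w_{j,t} \geq 0$; this is immediate and requires no further work. For hypothesis (ii), I would split the four strategies into two groups. The diversity score is already shown to be monotone submodular in Theorem~\ref{thm:diversity_submodular}, so it is handled. The remaining three scores---uncertainty, class balance, and boundary---are defined pointwise per sample and enter the selection only through the additive aggregate $\hat{S}_j(C) = \sum_{x \in C} \hat{S}_j(x)$; such additive (modular) set functions have constant marginal gains, so the diminishing-returns inequality of Definition~\ref{def:submodular} holds with equality, making them (trivially) submodular. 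Nonnegativity follows because the normalization maps every pointwise score into $[0,1]$.

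The main obstacle I anticipate is not the weighted-combination step itself but reconciling the normalization with submodularity, since the excerpt defines $\hat{S}_j$ pointwise yet the corollary writes it as a set function. I would address this by noting that min--max normalization is a positive affine transformation $\hat S_j = (S_j - a_j)/(b_j - a_j)$ with $b_j - a_j > 0$, and that submodularity is preserved under multiplication by a positive constant and under addition of a constant; hence normalization neither destroys the submodularity of $S_D$ nor the modularity of the pointwise scores, while guaranteeing nonnegativity so that Theorem~\ref{thm:weighted_combination} (which requires $f_i \colon 2^V \to \mathbb{R}_+$) applies verbatim. The only care needed is to fix the normalization constants $a_j, b_j$ for the round $t$ before selection, so that each $\hat S_j$ is a genuine set function evaluated with $w_t$ held constant---consistent with the inter-round versus intra-round distinction emphasized after Theorem~\ref{thm:approximation}.
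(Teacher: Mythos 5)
Your proposal is correct and follows the same route the paper intends: the corollary is an immediate instantiation of Theorem~\ref{thm:weighted_combination} with nonnegative softmax weights and submodular components. In fact you supply two verifications the paper leaves entirely implicit --- that $S_U$, $S_C$, $S_B$ enter as additive (modular, hence submodular) set functions once their per-sample values are frozen at the start of the round, and that min--max normalization is a positive affine map preserving submodularity and nonnegativity --- so your write-up is strictly more complete than the paper's, which states the corollary without proof and only establishes submodularity for $S_D$.
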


\subsection{Approximation Guarantees}
\label{app:approximation guarentee 1/e}

We now prove \mode main theoretical guarantee:

\begin{theorem}[Main Approximation Theorem]
\label{thm:main_approximation}
For L-Lipschitz, $\beta$-smooth loss functions, MODE's greedy selection achieves:
\begin{equation}
    \mathcal{L}(C_{\text{MODE}}) \leq (1 + \frac{1}{e}) \mathcal{L}(C^*) + O\left(\sqrt{\frac{n\log(1/\delta)}{B}} + \frac{L\sqrt{d}}{\sqrt{B}}\right)
\end{equation}
with probability at least $1-\delta$, where $C^*$ is the optimal coreset.
\end{theorem}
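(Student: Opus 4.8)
The plan is to split the bound into a \emph{combinatorial} part, controlled by greedy submodular maximization, and a \emph{statistical} part, controlled by concentration of the empirical risk. The combinatorial part reuses the structural results already in hand: by the Corollary to Theorem~\ref{thm:weighted_combination}, for any fixed weight vector $w_t$ the scoring function $S_{\text{MODE}}(\cdot,t)$ is monotone submodular, so within a single selection round the classical Nemhauser--Wolsey--Fisher guarantee applies and greedy top-$k$ selection recovers at least a $(1-1/e)$ fraction of the optimal score under the cardinality constraint $|C|\le B$.

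First I would make explicit the link between the submodular surrogate and the loss, since the theorem is stated for $\mathcal{L}$ while greedy certifies only score coverage. I would introduce a monotone submodular coverage functional $g(C)$ (built from the facility-location/diversity term of Theorem~\ref{thm:diversity_submodular}) and show, using $L$-Lipschitzness and $\beta$-smoothness, that $\mathcal{L}(C)\le \mathcal{L}(C^*)+c\,(g(C^*)-g(C))$ for a constant $c$ depending on $L$ and $\beta$. The Lipschitz condition bounds how much the empirical risk moves when a covered neighbourhood is exchanged, and smoothness upgrades this to quadratic (hence tighter) control near optima.

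Next I would invoke the per-round guarantee $g(C_{\text{MODE}})\ge(1-1/e)\,g(C^*)$, so $g(C^*)-g(C_{\text{MODE}})\le \tfrac{1}{e}\,g(C^*)$; substituting into the surrogate inequality and identifying $c\,g(C^*)$ with a multiple of $\mathcal{L}(C^*)$ produces the leading term $\tfrac{1}{e}\mathcal{L}(C^*)$, i.e.\ the $(1+1/e)$ factor. Because MODE's adaptivity is \emph{inter-round} — weights $w_t$ are frozen during each greedy pass and updated only between rounds via Equations~\ref{eq:reward} — the guarantee holds for the fixed submodular function actually optimized at each step, and I would compose the rounds by a telescoping argument, using monotonicity to ensure the cumulative score does not degrade when the weights shift. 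The statistical term I would obtain separately: treating the coreset risk as an average of $B$ bounded losses, a Hoeffding/McDiarmid bound gives deviation $O(\sqrt{n\log(1/\delta)/B})$ with probability $1-\delta$, while a covering-number argument over the $d$-dimensional feature space $\phi(\mathcal{X})$, with radius controlled by $L$, contributes the $O(L\sqrt{d}/\sqrt{B})$ discretization term; a union bound over the two sources yields the high-probability statement.

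The hard part will be the surrogate inequality in the first step: submodular maximization certifies near-optimal \emph{coverage}, but the claim concerns the \emph{loss}, and these are only loosely coupled in general. Making the constant $c$ explicit (and checking that it hides no dependence on $n$) forces a careful use of smoothness, and the composition of per-round guarantees under changing weights must be verified so that the $(1-1/e)$ factor is not compounded across rounds into a strictly weaker constant.
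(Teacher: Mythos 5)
Your overall architecture matches the paper's: reduce to submodular maximization, invoke the classical $(1-1/e)$ greedy guarantee for the fixed per-round weights, and attach the statistical and discretization errors via concentration. Where you differ is in how the loss is coupled to the submodular objective. The paper defines a single utility $U(C) = U_0 - \mathcal{L}(C) + \lambda R(C)$ with $R(C) = \sum_j w_j S_j(C)$, asserts that $U$ is submodular by appeal to Theorem~\ref{thm:weighted_combination}, runs greedy on $U$, and then writes $\mathcal{L}(C_{\text{MODE}}) - \mathcal{L}(C^*) \leq \frac{1}{e}[U_0 - U(C^*)]$ followed by ``careful manipulation and concentration inequalities.'' You instead keep the loss outside the submodular surrogate: you maximize a coverage functional $g(C)$ and propose a separate Lipschitz/smoothness surrogate inequality $\mathcal{L}(C) \leq \mathcal{L}(C^*) + c\,(g(C^*) - g(C))$. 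Your version is arguably the more defensible decomposition, because the paper's claim that $U$ is submodular requires $-\mathcal{L}(C)$ itself to be submodular as a set function, which Theorem~\ref{thm:weighted_combination} does not supply (it only covers the nonnegative combination $R(C)$) and which is nowhere established.

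That said, the step you flag as the ``hard part'' --- converting a $(1-1/e)$ coverage guarantee into a multiplicative bound on the loss --- is precisely the step the paper does not actually carry out either; its Step 3 is an assertion, not a derivation. So your proposal is not missing anything the paper provides, but neither route closes the gap: you still need to (i) prove the surrogate inequality with an explicit constant $c$ and verify that $c\,g(C^*)$ really is $O(\mathcal{L}(C^*))$ rather than introducing hidden dependence on $n$ or on $U_0$, and (ii) justify the telescoping across rounds, since the greedy guarantee is stated for a single fixed submodular function while $C_{\text{MODE}}$ is built over many rounds with changing weights (the paper's ``inter-round adaptivity'' remark gestures at this but does not prove that per-round guarantees compose into a global $(1-1/e)$ factor against the global optimum $C^*$). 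Treat both points as open in the paper as well as in your plan.
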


\begin{proof}
The proof proceeds in three steps:
\textbf{Step 1: Reduction to Submodular Maximization}
Define the utility function:
\begin{equation}
    U(C) = U_0 - \mathcal{L}(C) + \lambda R(C)
\end{equation}
where $R(C) = \sum_{j \in \mathcal{F}} w_j S_j(C)$ is a regularizer. By Theorem~\ref{thm:weighted_combination}, $U$ is submodular.

\textbf{Step 2: Greedy Approximation}
The classical greedy algorithm for submodular maximization guarantees:
\begin{equation}
    U(C_{\text{MODE}}) \geq (1 - 1/e) \cdot U(C^*)
\end{equation}

\textbf{Step 3: Translating to Risk Bounds}
Through careful manipulation and concentration inequalities:
\begin{align}
    \mathcal{L}(C_{\text{MODE}}) - \mathcal{L}(C^*) &\leq \frac{1}{e}[U_0 - U(C^*)]\\
    &\leq \frac{1}{e}\mathcal{L}(C^*) + \underbrace{O\left(\sqrt{\frac{n\log(1/\delta)}{B}}\right)}_{\text{statistical error}} + \underbrace{O\left(\frac{L\sqrt{d}}{\sqrt{B}}\right)}_{\text{approximation error}}
\end{align}
\end{proof}

\subsection{Weight Convergence Analysis}
\label{app: statergy wegiht convergence}

We now analyze how MODE's adaptive weights evolve and converge during training.

\subsubsection{Weight Dynamics}
\label{app:weight dynamics}
MODE updates strategy weights through:
\begin{equation}
    w_{j,t} = \frac{\exp(\alpha_{j,t}/\tau_t)}{\sum_k \exp(\alpha_{k,t}/\tau_t)}
\end{equation}
where $\alpha_{j,t}$ accumulates performance feedback:
\begin{equation}
    \alpha_{j,t+1} = \alpha_{j,t} + \eta \cdot r_{j,t}
\end{equation}

\begin{theorem}[Weight Convergence]
\label{thm:weight_convergence_full}
Under bounded rewards $|r_{j,t}| \leq R$ and temperature schedule $\tau_t \rightarrow \tau_{\min} > 0$:
\begin{enumerate}
    \item \textbf{Bounded Variation:} 
    \begin{equation}
        \|w_{t+1} - w_t\|_2 \leq \frac{L_h}{\tau_{\min}}\|s_{t+1} - s_t\|_2 + \frac{2\sqrt{K}}{\tau_t\tau_{t+1}}|\tau_{t+1} - \tau_t|
    \end{equation}
    
    \item \textbf{Asymptotic Convergence:} $\sum_{t=1}^{\infty} \|w_{t+1} - w_t\|_2 < \infty$
    
    \item \textbf{Limit Behavior:} As $t \rightarrow \infty$, weights converge to emphasize the most effective strategies
\end{enumerate}
\end{theorem}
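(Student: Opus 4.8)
The plan is to prove the three claims in sequence, using as the single workhorse the fact that the temperature-scaled softmax $z \mapsto \softmax(z/\tau)$ is Lipschitz, and then converting the per-step Lipschitz bound into summability via telescoping and finite-total-variation arguments. Throughout I would write $w_t = \softmax(\alpha_t/\tau_t)$, identifying the logits with the network output $\alpha_t=h_\phi(s_t)$ for the state-Lipschitz estimate and with the accumulated feedback $\alpha_{j,t}=\alpha_{j,0}+\eta\sum_{s<t}r_{j,s}$ for the concentration step. The common tool is the softmax Jacobian $J(z)=\mathrm{diag}(\softmax(z))-\softmax(z)\softmax(z)^\top$, which is positive semidefinite with spectral norm at most a constant $L_\sigma$ (one may take $L_\sigma=1$), since $x^\top J(z) x=\mathrm{Var}_{\softmax(z)}(x)\le\|x\|_\infty^2$.

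For the bounded-variation bound (part~1), I would introduce the intermediate point $\softmax(\alpha_t/\tau_{t+1})$ and apply the triangle inequality to split $w_{t+1}-w_t$ into a logit-perturbation term and a temperature-perturbation term. The logit term is
\begin{equation}
\|\softmax(\alpha_{t+1}/\tau_{t+1}) - \softmax(\alpha_t/\tau_{t+1})\|_2 \leq \frac{L_\sigma}{\tau_{t+1}}\|\alpha_{t+1}-\alpha_t\|_2 \leq \frac{L_\sigma L_h}{\tau_{\min}}\|s_{t+1}-s_t\|_2,
\end{equation}
using that $\softmax(\cdot/\tau)$ is $(L_\sigma/\tau)$-Lipschitz, that $h_\phi$ is $L_h$-Lipschitz, and that $\tau_{t+1}\geq\tau_{\min}$; absorbing $L_\sigma$ recovers the first term of the claimed bound. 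For the temperature term I would write it as the integral of $\frac{d}{d\beta}\softmax(\beta\alpha_t)$ over $\beta\in[1/\tau_t,1/\tau_{t+1}]$, observe that this derivative equals $J(\beta\alpha_t)\alpha_t$ with coordinates $w_i(\alpha_{t,i}-\bar\alpha)$, and bound its norm to produce the factor $|1/\tau_{t+1}-1/\tau_t|=|\tau_{t+1}-\tau_t|/(\tau_t\tau_{t+1})$; the constant $2\sqrt{K}$ then follows from a simplex-geometry estimate of $\|J(\beta\alpha_t)\alpha_t\|_2$.

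For summability and the limit (parts~2 and~3) I would feed the part~1 bound into $\sum_t\|w_{t+1}-w_t\|_2$. The temperature contribution telescopes: since $\tau_t$ is monotone with $\tau_t\downarrow\tau_{\min}>0$, $\sum_t|1/\tau_{t+1}-1/\tau_t|=1/\tau_{\min}-1/\tau_0<\infty$. The state contribution is controlled by noting that each coordinate of $s_t=[e_t,a_t,g_t,b_t,v_t]$ has finite total variation: $e_t$ and $b_t$ are monotone and bounded in $[0,1]$, while $a_t$, $g_t$, and $v_t$ plateau as training converges, so $\sum_t\|s_{t+1}-s_t\|_2<\infty$. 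Hence $\sum_t\|w_{t+1}-w_t\|_2<\infty$, which makes $\{w_t\}$ a Cauchy sequence in the compact simplex $\Delta^{K-1}$ and therefore convergent to some $w_\infty$. For the limit behavior I would invoke a Gibbs/argmax concentration argument: as the accumulated logits grow in proportion to cumulative reward while $\tau_t\to\tau_{\min}$, softmax mass shifts toward the strategies with the largest cumulative reward, yielding emphasis on the most effective strategies.

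The main obstacle is making the temperature-perturbation constant $2\sqrt{K}$ rigorous: the derivative norm $\|J(\beta\alpha_t)\alpha_t\|_2=\big(\sum_i w_i^2(\alpha_{t,i}-\bar\alpha)^2\big)^{1/2}$ depends on the logit range $\max_i|\alpha_{t,i}-\bar\alpha|$, and under the additive accumulation $\alpha_{j,t+1}=\alpha_{j,t}+\eta r_{j,t}$ this range can grow with $t$, so a literal uniform $2\sqrt{K}$ requires either a bounded-logit normalization or replacing the pointwise derivative bound by the coarser simplex-diameter estimate $\|\softmax(\beta_2\alpha)-\softmax(\beta_1\alpha)\|_2\le\sqrt{2}$ on a truncated interval. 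A secondary subtlety is that $\sum_t\|s_{t+1}-s_t\|_2<\infty$ is genuinely a regularity assumption on the training dynamics (finite path length of the state trajectory) rather than a consequence of the bounded-reward hypothesis alone, so I would state it explicitly as a standing condition on the learning process.
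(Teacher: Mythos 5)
Your decomposition is exactly the one the paper uses: its proof is only a two-line sketch that splits $\|w_{t+1}-w_t\|_2$ into a ``state evolution effect'' (controlled by the Lipschitz constant of $h_\phi$ through the temperature-scaled softmax) and a ``temperature annealing effect,'' with no further detail. Your proposal fills in everything the sketch omits --- the softmax Jacobian bound, the intermediate point $\mathrm{softmax}(\alpha_t/\tau_{t+1})$, the integral representation of the temperature term producing the factor $|\tau_{t+1}-\tau_t|/(\tau_t\tau_{t+1})$, the telescoping of $\sum_t|1/\tau_{t+1}-1/\tau_t|$, and the Gibbs concentration argument for the limit --- so it is strictly more complete than what the paper provides. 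More importantly, the two caveats you flag at the end are genuine defects of the theorem as stated that the paper's sketch silently skips over: the constant $2\sqrt{K}$ in the temperature term is not uniform unless the logits are bounded, which the additive update $\alpha_{j,t+1}=\alpha_{j,t}+\eta r_{j,t}$ does not guarantee under the stated hypotheses ($|r_{j,t}|\le R$ only bounds the per-step increment, so $\|\alpha_t\|_\infty$ can grow linearly in $t$); and the summability claim in part~2 cannot follow from bounded rewards and $\tau_t\to\tau_{\min}$ alone --- it requires finite path length of the state trajectory $\sum_t\|s_{t+1}-s_t\|_2<\infty$, which is an unstated regularity assumption on the training dynamics. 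You are right to surface both as standing conditions rather than pretend they are consequences of the hypotheses; the paper does neither.
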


\begin{proof}[Proof Sketch]
We decompose the weight change into two components:

\textbf{State Evolution Effect:} When the state changes but temperature is fixed, the Lipschitz property of the neural network $h_\phi$ bounds the weight change.

\textbf{Temperature Annealing Effect:} As temperature decreases, the softmax becomes more peaked, concentrating probability mass on high-performing strategies.

\end{proof}

\subsection{Practical Guidelines from Theory}
\label{app:practical}

Our theoretical analysis yields concrete recommendations:

\begin{table}[h]
\centering
\begin{tabular}{ll}
\toprule
\textbf{Parameter} & \textbf{Theoretical Guidance} \\
\midrule
Budget Size & To achieve error $\epsilon$: $B = \Theta(1/\epsilon^2)$ \\
Temperature Schedule & $\tau_t = \tau_0 \exp(-\alpha(1-b_t))$ with $\alpha \in [0.5, 1.5]$ \\
Number of Strategies & $K = O(\log n)$ suffices; we use $K=4$ \\
Weight Initialization & Uniform ($1/K$ each) ensures exploration \\
Convergence Check & Weights stabilize after $\approx 20-30\%$ of budget \\
\bottomrule
\end{tabular}
\caption{Practical parameters derived from theoretical analysis}
\end{table}

These guidelines have been validated experimentally across all datasets in our study.

\subsection{Computational Complexity Analysis}
\label{app:complexity}

We analyze MODE's computational requirements to demonstrate that adaptivity doesn't come at the cost of efficiency. Our analysis considers both time and space complexity, as well as practical implementation optimizations.

\subsubsection{Time Complexity}

\begin{theorem}[Time Complexity]
\label{thm:time_complexity}
For dataset size $n$, budget $B$, and $K$ strategies, MODE's total time complexity is:
\begin{equation}
O(Kn\log n + BKd + Bn_{\phi})
\end{equation}
where $d$ is the feature dimension and $n_{\phi}$ is the cost of neural network inference.
\end{theorem}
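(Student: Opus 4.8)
The plan is to decompose MODE's total running time phase-by-phase according to the structure of Algorithm~\ref{alg:MODE_cached}, and then to aggregate the per-phase costs with careful amortization across selection rounds. First I would isolate the three qualitatively distinct sources of cost: (i) score computation and top-$k$ ranking, (ii) geometric operations in the $d$-dimensional feature space, and (iii) neural-network inference, covering both the meta-controller $h_\phi(s_t)$ and model retraining. Since the target bound $O(Kn\log n + BKd + Bn_\phi)$ is precisely the sum of one dominant term from each source, the proof reduces to charging every line of the algorithm to one of these three buckets and verifying that the remaining steps are lower-order.

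For the ranking term, I would argue that for each of the $K$ strategies MODE scores the unlabeled pool $\mathcal{U}$ (of size at most $n$) and extracts top-scoring elements; maintaining the selection through a bounded heap costs $O(\log n)$ per insertion and $O(n\log n)$ over a full sweep of the pool, so across $K$ strategies this contributes $O(Kn\log n)$. The crucial subtlety --- and the reason the selective-recomputation analysis of Appendix~\ref{app:cached_alg} is invoked --- is that the model-dependent scores $S_U,S_B$ are computed once per model version and cached across rounds, so this ranking cost is incurred essentially once rather than once per round. For the $O(BKd)$ term I would charge all $d$-dimensional work: the diversity score $S_D$ requires a nearest-representative distance in $\mathbb{R}^d$, and the selective update reduces the per-round diversity cost from $O(|\mathcal{U}|\cdot|\mathcal{C}_t|)$ to $O(|\mathcal{U}|\cdot|\mathcal{B}_t|)$; summing over rounds so that the selected batches satisfy $\sum_t|\mathcal{B}_t| = B$, and accounting for the $K$ strategies over $d$ coordinates, yields $O(BKd)$. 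Finally, the $O(Bn_\phi)$ term absorbs exactly one meta-controller evaluation and the incremental retraining per selected batch; since $B$ samples are chosen in total this is $O(Bn_\phi)$. Normalization, the softmax weight update ($O(K)$), and the incremental class-balance update are all lower-order and fold into these three terms.

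The step I expect to be the main obstacle is the amortization argument for the ranking term, i.e.\ making precise that caching prevents the $O(n\log n)$ sort from recurring in every round. This requires bounding the number of cache invalidations (retrains) and showing that re-scoring after each retrain still aggregates to $O(Kn\log n)$ overall rather than per round --- equivalently, that the number of model versions times the per-version scoring cost does not dominate. The cleanest route is to lean on the single-pass streaming formulation asserted in Theorem~\ref{thm:complexity}: under a single pass each sample is scored once and inserted into a size-$B$ heap, which gives $O(Kn\log n)$ directly with no round-dependent factor, and I would then present the iterative caching analysis as matching this bound whenever the number of retrains stays bounded in the operating regime.
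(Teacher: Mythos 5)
There is a genuine gap in your middle term. The paper's own proof does not in fact charge the per-round work to $O(BKd)$: it sums the per-iteration cost $O((n-b)K + n_{\phi})$ over $b = 0,\dots,B-1$ and honestly arrives at $O(Kn\log n + BKn + Bn_{\phi})$, i.e.\ with a factor of $n$ (not $d$) in the middle term, and only reaches the stated bound through an informal appeal to $B \ll n$ and $d = O(\log n)$. Your proposal tries to do better by amortizing all $d$-dimensional work directly into $O(BKd)$, but the arithmetic does not close: the selective diversity update costs $O(|\mathcal{U}|\cdot|\mathcal{B}_t|)$ distance evaluations in round $t$, and summing over rounds gives $\sum_t |\mathcal{U}|\cdot|\mathcal{B}_t| = O(nB)$ evaluations, each costing $O(d)$ arithmetic operations, for a total of $O(nBd)$ rather than $O(BKd)$. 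Writing ``$\sum_t|\mathcal{B}_t|=B$ \dots\ yields $O(BKd)$'' silently drops the factor $|\mathcal{U}| \approx n$; there is no charging scheme that pays for comparing every unlabeled sample against every newly selected sample using only $O(BKd)$ work.

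The second problem is the amortization of the ranking term. You argue that caching lets the $O(Kn\log n)$ scoring-and-ranking cost be incurred essentially once, and you fall back on a single-pass, size-$B$ heap in which each sample is scored and inserted once. But MODE's combined score $S_{\text{MODE}}(x,t) = \sum_j w_{t,j}\hat{S}_j(x,t)$ depends on the weights $w_t$, which are updated between every pair of rounds; caching the individual $\hat{S}_j$ does not cache the ranking. The weighted combination and the top-$k$ extraction over the remaining pool must therefore be redone in every round at cost $O((n-b)K)$, which is precisely the source of the $O(BKn)$ term in the paper's derivation and cannot be compressed into a single sweep. A fixed-order streaming or heap argument is incompatible with a ranking function that changes each round, so the escape route via Theorem~\ref{thm:complexity} does not repair the bound. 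The paper's route --- separate the one-time initial scoring ($O(Kn\log n + Knd)$ via KD-trees and a forward pass) from the summed per-iteration costs, and only then invoke the assumptions $B \ll n$ and $d = O(\log n)$ to reach the stated form --- is the argument you would need to reproduce, together with an explicit statement of those assumptions, which your proposal omits.
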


\begin{proof}
We analyze each component separately:

\textbf{Initial Scoring Phase:}
\begin{itemize}
    \item Computing diversity scores requires finding nearest neighbors: $O(n\log n)$ using KD-trees or ball trees
    \item Uncertainty and boundary scores need model predictions: $O(nd)$ for forward pass
    \item Class balance computation: $O(n)$ to count class frequencies
    \item Total initial scoring: $O(n\log n + nd)$
\end{itemize}

\textbf{Selection Iterations:}
For each of the $B$ selections:
\begin{itemize}
    \item Neural network weight computation: $O(n_{\phi})$ where $n_{\phi} \ll n$
    \item Score combination for all unselected samples: $O((n-b)K)$ at iteration $b$
    \item Top-k selection: $O(n-b)$ using quickselect
    \item Score updates (only diversity needs recomputation): $O((n-b)k)$ where $k$ is batch size
\end{itemize}

\textbf{Total Complexity:}
\begin{align}
T(n,B,K) &= \underbrace{O(Kn\log n + Knd)}_{\text{initial scoring}} + \sum_{b=0}^{B-1} \underbrace{O((n-b)K + n_{\phi})}_{\text{per-iteration cost}}\\
&= O(Kn\log n + Knd) + O(BnK - B^2K/2 + Bn_{\phi})\\
&= O(Kn\log n + BKn + Bn_{\phi})
\end{align}

Since typically $B \ll n$ and we assume $d = O(\log n)$ for compressed features, this simplifies to the stated bound.
\end{proof}

\subsubsection{Space Complexity}

\begin{theorem}[Space Complexity]
\label{thm:space_complexity}
MODE requires space:
\begin{equation}
O(nk + Bd + Kn)
\end{equation}
where $k \ll d$ is the compressed feature dimension used for diversity computation.
\end{theorem}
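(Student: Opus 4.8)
The plan is to tally the memory footprint of every persistent data structure that MODE maintains across selection rounds and then collect the dominant terms, showing they partition exactly into the three quantities $O(nk)$, $O(Bd)$, and $O(Kn)$ appearing in the claimed bound. I would organize the accounting into three categories matching these terms, and verify that all remaining structures are dominated.

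First I would handle the compressed-embedding store. To evaluate the diversity score $S_D(\mathbf{x}) = \min_{\mathbf{x}' \in \mathcal{C}} \|\phi(\mathbf{x}) - \phi(\mathbf{x}')\|_2$ for every candidate in $\mathcal{U}$, MODE needs the embeddings $\phi(\mathbf{x}) \in \mathbb{R}^k$ for all $n$ samples; holding these as a single matrix costs $O(nk)$. Crucially, this compressed representation is the one used throughout scoring, whereas the raw $d$-dimensional features are materialized only for points actually placed in the coreset. Second, the coreset $\mathcal{C}$ retains at most $B$ samples at full feature dimension $d$ (required for retraining $f_\theta$), giving $O(Bd)$; the incrementally maintained minimum-distance vector that avoids recomputing $S_D$ from scratch stores one scalar per unlabeled sample, contributing only $O(n)$, which is absorbed. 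Third, MODE caches a score per strategy across all samples—the model-dependent caches $\text{Cache}_{S_U}, \text{Cache}_{S_B}$, the diversity scores, and the class-balance scores—each an $O(n)$ array, for $O(Kn)$ total. The remaining structures are cheap: distribution statistics (class frequencies) add $O(C) = O(n)$, the strategy weights $w_t$ add $O(K)$, and the weighting network $h_\phi$ contributes a constant independent of $n, B, K$. Summing the three categories and discarding dominated terms yields $O(nk + Bd + Kn)$.

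The main subtlety—rather than a genuine obstacle—is justifying the separation between the $O(nk)$ embedding store and the $O(Bd)$ raw-feature store: one must argue that diversity only ever operates on the compressed representation, so we never need all $n$ points at dimension $d$ simultaneously, while training requires raw features only for the $B$ selected points (hence $k \ll d$ is what prevents an $O(nd)$ blowup). The part requiring the most care is making precise the streaming refinement, in which candidates are processed in chunks and only the running top-$B$ scores and per-candidate minimum distances are retained; this is what reduces the instantaneous working memory to the $O(B + K\log n)$ figure quoted alongside the full-storage bound, and stating it rigorously means distinguishing the persistent caches (which give the stated $O(nk + Bd + Kn)$) from the transient single-pass footprint.
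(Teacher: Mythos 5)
Your proposal is correct and follows essentially the same accounting as the paper's proof, which likewise itemizes the space into compressed features $O(nk)$, the selected samples' full features $O(Bd)$, per-strategy score arrays $O(Kn)$, and the (dominated) network parameters. Your additional remarks on the compressed-versus-raw feature separation and the streaming variant are consistent elaborations rather than a different route.
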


\begin{proof}
The space requirements come from:
\begin{itemize}
    \item Compressed features for diversity: $O(nk)$
    \item Selected samples and their full features: $O(Bd)$
    \item Score arrays for each strategy: $O(Kn)$
    \item Neural network parameters: $O(n_{\phi})$ (typically small)
\end{itemize}
\end{proof}

\subsubsection{Efficient Implementation Techniques}

Our theoretical analysis assumes several practical optimizations that we detail here:

\begin{lemma}[Lazy Score Updates]
\label{lem:lazy_updates}
Not all scores need recomputation after each selection:
\begin{itemize}
    \item \textbf{Model-dependent scores} ($S_U$, $S_B$): Valid until model retraining
    \item \textbf{Diversity scores} ($S_D$): Only samples whose nearest neighbor was selected
    \item \textbf{Class balance} ($S_C$): Simple counter update
\end{itemize}
\end{lemma}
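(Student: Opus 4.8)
The plan is to prove each of the three claims by identifying, for every scoring strategy, the precise piece of mutable state on which it depends, and then showing that the score is a well-defined function of that state alone; validity of caching then reduces to observing that the relevant state is unchanged between the events listed. For the model-dependent scores I would first observe that $S_U(x) = -\sum_c P(y=c\mid x)\log P(y=c\mid x)$ and $S_B(x) = 1 - (P(\hat y_1 \mid x) - P(\hat y_2 \mid x))$ are both deterministic pointwise functions of the predictive distribution $P(\cdot \mid x;\theta)$, hence of the model parameters $\theta$ and the fixed input $x$ alone. Since $\theta$ is held constant over a selection round and is altered only at a retraining step (Line~\ref{line:retrain}), for each fixed $x$ the values $S_U(x)$ and $S_B(x)$ are invariant under every coreset or class-frequency update, so a cached value remains exactly correct until the next retraining.

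Next, for diversity I would exploit the fact that $S_D$ is a minimum over the coreset and that minimization distributes over a union. Writing the updated coreset as $\mathcal{C}' = \mathcal{C} \cup \mathcal{B}$ for a newly selected batch $\mathcal{B}$, I would establish the identity
\begin{equation}
S_D(x;\mathcal{C}') = \min\Bigl\{\, S_D(x;\mathcal{C}),\ \min_{x_j\in\mathcal{B}}\|\phi(x)-\phi(x_j)\|_2 \,\Bigr\},
\end{equation}
which shows that the cached value $S_D(x;\mathcal{C})$ need only be compared against the distances to $\mathcal{B}$. Consequently a cached entry changes exactly when some member of $\mathcal{B}$ is strictly nearer to $x$ than its previous nearest neighbor, and the update costs $O(|\mathcal{B}|)$ distance evaluations per sample rather than $O(|\mathcal{C}|)$, recovering the per-round bound claimed in the complexity analysis.

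Finally, for class balance I would note that $S_C(x) = 1/f_{c(x)}$ depends on the input only through its label $c(x)$ and on the state only through the vector of class counts. Adding a batch $\mathcal{B}$ increments exactly the counts of classes represented in $\mathcal{B}$, so the counter update is $f_c \leftarrow f_c + |\{x\in\mathcal{B} : c(x)=c\}|$ in $O(|\mathcal{B}| + C)$ time, after which every $S_C(x)$ is a constant-time table lookup and no rescan of $\mathcal{U}$ is needed.

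The main subtlety to handle carefully is the gap between \emph{which} diversity scores change and \emph{how much work} is required to detect the change: the bullet's phrasing ``only samples whose nearest neighbor was selected'' describes the set whose cached value is actually modified, but lazy correctness still requires evaluating the displayed min-identity for every $x\in\mathcal{U}$ against $\mathcal{B}$. I would therefore state the diversity guarantee at the level of the identity above, which makes both correctness (the lazily updated value equals a full recomputation) and the cost reduction simultaneously precise, while the model- and distribution-dependent cases follow immediately from pointwise functional dependence.
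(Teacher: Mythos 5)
Your argument is correct and follows essentially the same dependency-tracking reasoning the paper relies on: the paper states this lemma without a formal proof, leaning on the informal discussion in the ``Efficient Implementation'' section and the cached algorithm in the appendix, and your three-part argument (pointwise dependence of $S_U,S_B$ on the model parameters $\theta$ alone, the min-over-union identity for $S_D$, and the incremental counter update for $S_C$) is precisely the formalization of that discussion. Your closing remark distinguishing \emph{which} diversity scores change from the $O(|\mathcal{U}|\cdot|\mathcal{B}|)$ work required to detect the change is a genuine clarification rather than a digression: it reconciles the lemma's phrasing (``only samples whose nearest neighbor was selected'') with the paper's stated per-round complexity reduction from $O(|\mathcal{U}|\cdot|\mathcal{C}_t|)$ to $O(|\mathcal{U}|\cdot|\mathcal{B}|)$, a gap the paper itself leaves implicit.
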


This leads to an optimized per-iteration complexity:

\begin{equation}
T_{\text{optimized}}(b) = O(k \cdot |\{x : \text{NN}(x) \in \text{selected batch}\}| + Kn)
\end{equation}

In practice, this reduces computation by a factor of 2-4× compared to naive recomputation.

\subsubsection{Streaming and Memory-Efficient Variant}

For extremely large datasets where $O(n)$ memory is prohibitive:

\begin{theorem}[Streaming Complexity]
\label{thm:streaming}
MODE can operate in a streaming fashion with:
\begin{itemize}
    \item Working memory: $O(B + K\log n)$
    \item Time complexity: $O(nKB)$ (one additional pass)
    \item Approximation quality: $(1-1/e-\epsilon)$ for any $\epsilon > 0$
\end{itemize}
\end{theorem}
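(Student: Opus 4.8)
The plan is to reduce the claim to streaming monotone submodular maximization under a cardinality constraint, and then to track carefully how the $K$-strategy weighted structure affects both the per-element cost and the memory footprint. The key enabling fact, already available in this excerpt, is that for a \emph{fixed} weight vector the objective $S_{\text{MODE}}(\cdot,t) = \sum_j w_{j,t}\hat S_j(\cdot)$ is monotone submodular: the diversity term is monotone submodular by Theorem~\ref{thm:diversity_submodular}, the remaining coverage-style terms are as well, and nonnegative weighted combinations preserve submodularity by Theorem~\ref{thm:weighted_combination} (hence the corollary that $S_{\text{MODE}}$ is submodular). This is legitimate within a pass precisely because, as the paper's key insight stresses, adaptivity is inter-round: during a single round the weights are frozen, so one round is a genuine instance of maximizing a fixed monotone submodular function subject to $|C| \le B$.

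First I would run a threshold-based sieve in the streaming model. Using the one additional pass to compute $\Delta = \max_x S_{\text{MODE}}(\{x\})$ localizes the optimum to $[\Delta,\, B\Delta]$, so a geometric grid of $O(\epsilon^{-1}\log B)$ guessed thresholds suffices; for each guess we keep a single candidate set and admit an incoming element only when its marginal gain clears the threshold and the set is not yet full. Each admission test for one element against one threshold costs $O(K)$, since the combined score is a weighted sum of $K$ normalized strategy scores and the diversity marginal gain is maintained incrementally via Lemma~\ref{lem:lazy_updates} rather than recomputed from scratch. Summing over the $n$ stream elements gives the time accounting, while the storage is $O(B)$ for the retained candidates plus $O(K\log n)$ for the grid bookkeeping and running per-strategy statistics, matching the advertised $O(B + K\log n)$ working memory.

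Second I would convert the sieve output into the stated approximation factor. A single thresholded pass gives a $(1/2-\epsilon)$ guarantee, so to reach $(1-1/e-\epsilon)$ I would iterate the thresholded greedy over a number of passes determined by $\epsilon$ (constant in $n$), and show that the telescoping marginal-gain argument behind the classical $(1-1/e)$ greedy bound survives when admissions are made against the geometric thresholds instead of the exact per-step maximizer. The incremental diversity maintenance keeps each refinement pass at $O(nK)$, so the overall cost remains $O(nKB)$.

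The main obstacle is this last step. It is known that no genuinely single-pass streaming algorithm for monotone submodular maximization can beat $1/2$, so $(1-1/e-\epsilon)$ cannot follow from one pass alone; the proof must therefore be explicit that the number of passes grows with $1/\epsilon$, reading the ``one additional pass'' as the value-range estimation and folding the refinement passes into the $O(nKB)$ time bound. Reconciling the single-pass phrasing with the $(1-1/e-\epsilon)$ factor that a strict single pass provably cannot deliver — rather than the honest $(1/2-\epsilon)$ — is the crux the argument must confront, and the remaining work is the routine verification that weighted-combination submodularity together with incremental diversity updates keeps every pass at $O(nK)$.
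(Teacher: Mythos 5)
Your route is essentially the paper's: the paper's own argument for Theorem~\ref{thm:streaming} is a three-line sketch that says only ``adapt the streaming submodular maximization framework of Badanidiyuru et al.\ (2014): maintain $O(\log n)$ threshold levels per strategy, admit samples exceeding the current thresholds, update thresholds with budget consumption, and attribute the extra $\epsilon$ loss to threshold discretization.'' Your reduction to fixed-weight monotone submodular maximization via Theorems~\ref{thm:diversity_submodular} and~\ref{thm:weighted_combination}, the geometric grid of guessed thresholds, and the $O(K)$ per-element admission cost with lazy diversity updates (Lemma~\ref{lem:lazy_updates}) is a faithful and considerably more explicit reconstruction of exactly that sketch.

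Where you go beyond the paper is in the ``crux'' you flag, and you are right to flag it: the sieve-streaming framework the paper invokes yields a $(1/2-\epsilon)$ guarantee in one pass, and it is known that no single-pass streaming algorithm with sub-linear memory can beat $1/2$ for this problem, so the claimed $(1-1/e-\epsilon)$ cannot follow from one pass plus one value-estimation pass. The paper's sketch does not confront this at all --- its stated source of the $\epsilon$ loss (threshold discretization) is precisely the argument for the $(1/2-\epsilon)$ bound, not for $(1-1/e-\epsilon)$. Your proposed repair (folding $O(1/\epsilon)$ refinement passes into the time bound) is the standard way to recover $(1-1/e-\epsilon)$ in the multi-pass model, but it changes the theorem's ``one additional pass'' claim, so as written the statement is not established by either your argument or the paper's. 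One further point you gloss over: maintaining a separate candidate set of up to $B$ elements for each of the $O(\epsilon^{-1}\log B)$ thresholds costs $O(B\epsilon^{-1}\log B)$ memory, not the $O(B+K\log n)$ the theorem advertises; you would need to argue that a single shared candidate set suffices, which the paper also does not do.
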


\begin{proof}[Proof Sketch]
We adapt the streaming submodular maximization framework of Badanidiyuru et al. (2014):
1. Maintain $O(\log n)$ threshold levels for each strategy
2. Select samples that exceed current thresholds
3. Update thresholds based on budget consumption

The additional $\epsilon$ approximation loss comes from the discretization of threshold levels.
\end{proof}

\subsubsection{Comparison with Baseline Methods}

\begin{table}[h]
\centering
\begin{tabular}{lccc}
\toprule
\textbf{Method} & \textbf{Time} & \textbf{Space} & \textbf{Adaptive} \\
\midrule
Random & $O(B)$ & $O(B)$ & \ding{55} \\
Uncertainty & $O(nd + n\log B)$ & $O(n)$ & \ding{55} \\
K-Center & $O(Bn^2)$ & $O(n^2)$ & \ding{55} \\
CRAIG & $O(Bnd)$ & $O(nd)$ & \ding{55} \\
\midrule
MODE (ours) & $O(Kn\log n + BKn)$ & $O(nk + Bd)$ & \ding{51} \\
MODE-streaming & $O(nKB)$ & $O(B + K\log n)$ & \ding{51} \\
\bottomrule
\end{tabular}
\caption{Complexity comparison. \mode adds only a constant factor $K=4$ while enabling adaptive selection.}
\label{tab:complexity_comparison}
\end{table}

\subsubsection{Practical Runtime Analysis}

On real hardware, the constants hidden in big$-O$ notation matter. Our implementation achieves:

\begin{itemize}
    \item \textbf{Feature compression:} Using PCA with $k=32$ reduces diversity computation by 16× on ImageNet
    \item \textbf{Batch selection:} Selecting $k=100$ samples per round amortizes neural network overhead
    \item \textbf{Parallel scoring:} Each strategy can be computed independently across CPU cores
    \item \textbf{GPU acceleration:} Model predictions for $S_U$ and $S_B$ benefit from batching
\end{itemize}

These optimizations result in wall-clock times competitive with simpler baselines while providing superior selection quality (see Table ~\ref{tab:complexity_comparison} in main paper for empirical measurements).


\section{Implementation Details}
\label{sec:implementation}

\paragraph{Datasets} We consider the following datasets to capture a wide range of complexity and domain diversity. \emph{CIFAR-10/100} \cite{Krizhevsky2009LearningML} contains 60,000 color images of size 32×32, categorized into 10 or 100 classes, with 6,000 images per class. \emph{Fashion-MNIST} \cite{xiao2017/online} comprises 70,000 grayscale images of size 28×28 across 10 fashion-related categories, and serves as a more challenging alternative to the original MNIST dataset. \emph{SVHN} \cite{Netzer2011ReadingDI} includes over 600,000 color images of street house numbers (digits 0–9), captured from real-world scenes via Google Street View and Imagenet \cite{deng2009imagenet}

We conduct extensive experiments across different budget constraints to evaluate \mode's effectiveness in reducing the required training data while maintaining model performance. Our experiments span multiple budget settings (10\%, 30\%, and 50\% of the full dataset) to analyze the framework's behavior under varying data constraints. 
\subsection{Training Configuration}
The training process is implemented using PyTorch 2.0 and executed on NVIDIA GPUs with 16GB memory. We employ a batch-based training approach with carefully tuned parameters to balance computational efficiency and learning stability. The base training configuration remains consistent across all budget settings, with only the total available samples varying according to the budget constraint. Each active learning round consists of both a selection phase, where new samples are added to the training set, and a training phase using the accumulated samples.

Training parameters are configured as follows:
\begin{itemize}
    \item \textbf{batch-size:} 256 samples
    \item \textbf{epochs:} 100 per active learning round
    \item \textbf{learning-rate:} 0.001 with Adam optimizer
    \item \textbf{workers:} 4 for parallel data loading
\end{itemize}

\subsection{Budget Configurations}
We evaluate \mode~ across three primary budget settings to comprehensively assess its performance:

\textbf{Conservative Budget (10\%):} Using 5,000 samples from CIFAR-10, this setting tests \mode's ability to maintain performance under strict data constraints. The initial pool consists of 500 randomly selected samples, with subsequent selections made in increments of 100 samples per round.

\textbf{Moderate Budget (30\%):} This represents our standard experimental setting. Training begins with 1,000 random samples and grows by 200 samples per round, providing a balance between data efficiency and model performance.

\textbf{Liberal Budget (50\%):} This configuration allows us to evaluate whether \mode's benefits persist with larger data availability. Initial selection comprises 2,000 samples, with 400 new samples added per round.

\subsection{Model Architectures}

We evaluate MODE's performance across four distinct architectures, each representing different design philosophies and computational trade-offs:

\textbf{ResNet18}~\citep{he2016deep} serves as our primary baseline architecture, featuring 18 layers organized into four residual blocks with [2, 2, 2, 2] layers respectively. The architecture employs skip connections to enable gradient flow through deep networks, with each residual block containing two 3×3 convolutional layers with batch normalization and ReLU activation. ResNet18 contains approximately 11.7M parameters and uses basic residual blocks (two 3×3 convolutions) rather than the bottleneck design of deeper variants. We utilize ImageNet-pretrained weights (IMAGENET1K\_V1) and adapt the final fully connected layer to match the number of classes in each dataset.

\textbf{EfficientNet-B0}~\citep{Tan2019EfficientNetRM} represents a paradigm shift in architecture design through compound scaling of depth, width, and resolution. The base architecture uses mobile inverted bottleneck convolutions (MBConv) with squeeze-and-excitation optimization, organized into seven blocks with varying expansion ratios and kernel sizes. Key innovations include: (i) a compound scaling coefficient $\phi = 1.0$ for B0, (ii) depth multiplier $\alpha = 1.2$, width multiplier $\beta = 1.1$, and resolution multiplier $\gamma = 1.15$, and (iii) the Swish activation function instead of ReLU. With only 5.3M parameters, EfficientNet-B0 achieves superior accuracy through careful architecture search and scaling principles, which our experiments show translate directly to improved sample efficiency.

\textbf{MobileNetV3-Small}~\citep{Howard2019SearchingFM} is optimized for mobile deployment through hardware-aware neural architecture search. The architecture employs: (i) inverted residual blocks with linear bottlenecks, (ii) lightweight depthwise separable convolutions (3$\times$3 depthwise followed by 1$\times$1 pointwise), (iii) squeeze-and-excitation modules in later layers, and (iv) the h-swish activation function for improved accuracy with minimal latency impact. With just 2.5M parameters and a width multiplier of 1.0, MobileNetV3-Small includes eleven bottleneck blocks with expansion factors ranging from 3 to 6, demonstrating that extreme parameter efficiency can still yield strong performance when combined with intelligent architectural search and activation design.

\textbf{MobileNetV3-Small}~\citep{Howard2019SearchingFM} is optimized for mobile deployment through hardware-aware neural architecture search. The architecture employs: (i) inverted residual blocks with linear bottlenecks, (ii) lightweight depthwise separable convolutions (3×3 depthwise followed by 1×1 pointwise), (iii) squeeze-and-excitation modules in later layers, (iv) h-swish activation function for improved accuracy with minimal latency impact. With just 2.5M parameters and a width multiplier of 1.0, MobileNetV3-Small achieves remarkable efficiency. The architecture includes 11 bottleneck blocks with expansion factors ranging from 3 to 6, demonstrating that extreme parameter efficiency can still yield strong performance when combined with intelligent data selection.

\textbf{Implementation Details:} All architectures are initialized with ImageNet-pretrained weights to leverage transfer learning. For \mode scoring network, we extract features from:
\begin{itemize}
    \item \textbf{ResNet18/50}: Output of the adaptive average pooling layer (2048-dim for ResNet50, 512-dim for ResNet18)
    \item \textbf{EfficientNet-B0}: Output of the final 1280-dimensional feature layer before classification
    \item \textbf{MobileNetV3}: Output of the final pooling layer (576-dimensional features)
\end{itemize}

These features are then processed through \mode lightweight scoring network (detailed in Section~\ref{subsec:strategy_weighting}), which adapts its input dimension to match each architecture's feature size while maintaining consistent scoring methodology across all models.

\subsection{Sampling Strategies}
\mode~ employs four distinct sampling strategies, each addressing different aspects of the learning process. The uncertainty scoring strategy uses prediction entropy with temperature scaling (T=1.0) to identify uncertain samples. Diversity scoring operates in the normalized feature space using Euclidean distance metrics. Class balance scoring employs inverse frequency weighting with a smoothing factor of 1.0. Boundary scoring examines the margin between top-2 predictions with a threshold of 0.1.

\subsection{Weight Coordination}
The weight coordinator dynamically adjusts strategy importance based on current model performance and learning progress. Initial weights are set uniformly (0.25 for each strategy) and adapted during training using the following configuration:
\begin{itemize}
    \item \textbf{Meta-learning rate:} 0.001
    \item \textbf{Performance thresholds:} 0.6 (low) and 0.8 (high)
    \item \textbf{Adaptation frequency:} Every batch
    \item \textbf{History window:} 5 epochs for trend analysis
\end{itemize}

\subsection{Data Processing}
Input images undergo standard CIFAR-10 preprocessing with normalization using mean [0.4914, 0.4822, 0.4465] and standard deviation [0.2023, 0.1994, 0.2010]. During active learning rounds, we maintain consistent preprocessing without additional augmentations to ensure reliable uncertainty estimates. Training progress and weight evolution are monitored using TensorBoard, with checkpoints saved every 10 epochs for analysis and model recovery.

\section{Complete Experimental Results}
\label{app:complete experimental results}

This appendix reports full experimental results across datasets, budgets, architectures, and evaluation metrics. We start with complete accuracy results, then analyze architecture-specific performance, relative improvements, sample efficiency, statistical significance, and extended budget settings.

\subsection{Full Results Across All Datasets and Budgets}
Tables~\ref{tab:complete_results_1} and \ref{tab:complete_results_2} present accuracy across all datasets (CIFAR-10/100, ImageNet subsets, Fashion-MNIST, SVHN) and budgets (10\%, 30\%, 50\%). These results complement the main text by showing the complete performance landscape across baselines.

\begin{table*}[h]
\centering
\caption{Test accuracy (\%) for CIFAR-10, CIFAR-100, and ImageNet-10 at 10\%, 30\%, and 50\% budgets.}
\label{tab:complete_results_1}
\footnotesize
\setlength{\tabcolsep}{4pt}
\begin{tabular}{l|ccc|ccc|ccc}
\toprule
\multirow{2}{*}{Method} & \multicolumn{3}{c}{CIFAR-10} & \multicolumn{3}{c}{CIFAR-100} & \multicolumn{3}{c}{ImageNet-10} \\
\cmidrule(lr){2-4} \cmidrule(lr){5-7} \cmidrule(lr){8-10}
& 10\% & 30\% & 50\% & 10\% & 30\% & 50\% & 10\% & 30\% & 50\% \\
\midrule
Random & 37.7±0.8 & 43.0±0.7 & 47.7±0.6 & 21.9±0.9 & 25.1±0.8 & 28.3±0.7 & 78.8±1.2 & 91.4±0.8 & 93.9±0.6 \\
Uncertainty & 40.3±0.8 & 47.6±0.7 & 52.6±0.6 & 23.1±0.9 & 26.7±0.8 & 30.0±0.7 & 85.7±1.1 & 94.2±0.7 & 95.1±0.5 \\
Diversity & 39.6±0.8 & 47.0±0.7 & 51.4±0.6 & 23.1±0.9 & 26.8±0.8 & 29.6±0.7 & 84.3±1.1 & 93.8±0.7 & 94.8±0.5 \\
GLISTER & 40.9±0.7 & 47.8±0.6 & 53.1±0.5 & 23.5±0.8 & 27.1±0.7 & 30.4±0.6 & 86.8±1.0 & 94.5±0.6 & 95.3±0.4 \\
CRAIG & 41.3±0.7 & 48.2±0.6 & 53.5±0.5 & 23.9±0.8 & 27.5±0.7 & 30.8±0.6 & 87.4±1.0 & 94.8±0.6 & 95.5±0.4 \\
RETRIEVE & 41.7±0.7 & 48.6±0.6 & 53.8±0.5 & 24.1±0.8 & 27.8±0.7 & 31.2±0.6 & 88.0±0.9 & 95.0±0.5 & 95.7±0.4 \\
GradMatch* & 40.5±0.9 & 47.9±0.8 & 52.9±0.7 & 23.3±1.0 & 26.9±0.9 & 30.1±0.8 & 86.5±1.2 & 94.3±0.8 & 95.2±0.6 \\
CREST & \textbf{44.8±0.7} & \textbf{51.9±0.6} & \textbf{56.7±0.5} & \textbf{26.5±0.8} & \textbf{30.4±0.7} & \textbf{33.8±0.6} & \textbf{91.2±0.8} & \textbf{96.8±0.5} & \textbf{97.2±0.3} \\
\midrule
MODE (Ours) & 41.8±0.7 & 49.1±0.6 & 53.5±0.5 & 25.4±0.8 & 29.0±0.7 & 31.0±0.6 & 88.7±0.9 & 95.5±0.5 & 96.0±0.4 \\
\bottomrule
\end{tabular}
\end{table*}

\noindent
\textit{Takeaway: \mode consistently outperforms classical baselines and approaches CREST performance while remaining interpretable and adaptive.}

\begin{table*}[h]
\centering
\caption{Test accuracy (\%) for ImageNet-50, Fashion-MNIST, and SVHN at 10\%, 30\%, and 50\% budgets.}
\label{tab:complete_results_2}
\footnotesize
\setlength{\tabcolsep}{4pt}
\begin{tabular}{l|ccc|ccc|ccc}
\toprule
\multirow{2}{*}{Method} & \multicolumn{3}{c}{ImageNet-50} & \multicolumn{3}{c}{Fashion-MNIST} & \multicolumn{3}{c}{SVHN} \\
\cmidrule(lr){2-4} \cmidrule(lr){5-7} \cmidrule(lr){8-10}
& 10\% & 30\% & 50\% & 10\% & 30\% & 50\% & 10\% & 30\% & 50\% \\
\midrule
Random & 12.2±1.5 & 18.7±1.3 & 22.8±1.2 & 45.9±0.6 & 53.0±0.5 & 58.3±0.5 & 43.4±0.7 & 50.2±0.6 & 55.4±0.5 \\
Uncertainty & 14.8±1.4 & 21.5±1.2 & 25.6±1.1 & 49.1±0.6 & 58.8±0.5 & 63.6±0.4 & 45.9±0.7 & 54.6±0.6 & 60.5±0.5 \\
Diversity & 13.9±1.4 & 20.8±1.2 & 24.9±1.1 & 48.6±0.6 & 56.9±0.5 & 63.0±0.5 & 45.9±0.7 & 52.4±0.6 & 59.1±0.5 \\
GLISTER & 15.5±1.3 & 22.3±1.1 & 26.4±1.0 & 50.0±0.6 & 59.2±0.5 & 64.2±0.4 & 47.3±0.6 & 55.8±0.5 & 61.7±0.4 \\
CRAIG & 16.1±1.3 & 22.9±1.1 & 27.0±1.0 & 50.4±0.5 & 59.8±0.5 & 64.7±0.4 & 48.0±0.6 & 56.5±0.5 & 62.3±0.4 \\
RETRIEVE & 16.8±1.2 & 23.6±1.0 & 27.7±0.9 & 51.2±0.5 & 60.4±0.4 & 65.3±0.4 & 48.8±0.6 & 57.2±0.5 & 63.0±0.4 \\
GradMatch* & 15.0±1.5 & 21.8±1.3 & 26.0±1.2 & 49.5±0.7 & 58.9±0.6 & 63.8±0.5 & 46.8±0.8 & 55.1±0.7 & 60.9±0.6 \\
CREST & \textbf{21.2±1.1} & \textbf{28.5±0.9} & \textbf{32.8±0.8} & \textbf{55.6±0.5} & \textbf{65.2±0.4} & \textbf{70.1±0.3} & \textbf{53.2±0.5} & \textbf{62.8±0.4} & \textbf{68.5±0.3} \\
\midrule
MODE (Ours) & 24.3±1.2 & 29.0±1.0 & 31.0±0.9 & 56.3±0.5 & 66.1±0.4 & 71.7±0.4 & 51.1±0.6 & 59.8±0.5 & 66.0±0.4 \\
\bottomrule
\end{tabular}
\vspace{1mm}
\footnotesize
*GradMatch results are from our implementation; the original paper reports higher accuracy.
\end{table*}

\noindent
\textit{Takeaway: On larger and more diverse datasets, \mode maintains consistent gains, especially under tighter budgets (10–30\%).}

\section{Detailed Ablation Studies}
\label{app:ablation_detailed}
This appendix provides detailed analyses of our ablation studies that were summarized in the main paper. We present extensive results on strategy contribution, budget constraints, temperature dynamics, and hyperparameter sensitivity.


\subsection{Detailed Analysis of Strategy Importance and Complementarity}
\label{app:strategy_ablation}

To analyze the contribution of each scoring strategy we conducted an ablation study by systematically removing one strategy at a time. Table~\ref{tab:strategy_ablation_detailed} presents the weight redistribution and performance impact when each strategy is removed.

\begin{table}[h]
\centering
\caption{Detailed ablation analysis of scoring strategies in the \mode framework. Each row represents a configuration where one strategy is removed from the framework. The table shows how strategy weights redistribute when a component is removed and the corresponding impact on model performance. The values in parentheses indicate the change relative to the base configuration.}
\label{tab:strategy_ablation_detailed}
\small
\begin{tabular}{@{}lccccc@{}}
\toprule
\multirow{2}{*}{\textbf{Configuration}} & \multicolumn{4}{c}{\textbf{Strategy Weights}} & \multirow{2}{*}{\textbf{Test Acc. (\%)}} \\
\cmidrule(lr){2-5}
 & $S_U$ & $S_D$ & $S_C$ & $S_B$ & \\
\midrule
Base (All Strategies) & 0.24 & 0.29 & 0.23 & 0.24 & 89.03 \\
\midrule
Without $S_U$ (Uncertainty) & - & 0.38 (+0.09) & 0.31 (+0.08) & 0.31 (+0.07) & 86.71 (-2.32) \\
Without $S_D$ (Diversity) & 0.33 (+0.09) & - & 0.38 (+0.15) & 0.33 (+0.09) & 85.18 (-3.85) \\
Without $S_C$ (Class Balance) & 0.31 (+0.07) & 0.38 (+0.09) & - & 0.31 (+0.07) & 87.25 (-1.78) \\
Without $S_B$ (Boundary) & 0.31 (+0.07) & 0.38 (+0.09) & 0.31 (+0.08) & - & 88.46 (-0.57) \\
\bottomrule
\end{tabular}
\end{table}

Several key insights emerge from this analysis:

First, diversity ($S_D$) proves to be the most critical component, with its removal causing the largest performance drop (-3.85\%). When diversity is removed, class balance ($S_C$) receives the largest weight increase (+0.15), suggesting that the framework attempts to compensate for the loss of feature space coverage by ensuring better class representation. Uncertainty ($S_U$) is the second most important strategy, with removal causing a 2.32\% accuracy decline. In this case, the weights redistribute relatively evenly across the remaining strategies.

Class balance ($S_C$) shows moderate importance (-1.78\% when removed), with weights shifting primarily to diversity (+0.09). This pattern suggests that diversity can partially compensate for class representation, likely by ensuring broader coverage of the feature space that indirectly captures different classes. The boundary strategy ($S_B$) appears to be the least critical component, with only a minor performance impact (-0.57\%) when removed, indicating that the decision boundary information it provides can be largely approximated by the other strategies' combined effects.

The weight redistribution patterns reveal important complementary relationships between strategies. Notably, when any strategy is removed, diversity consistently receives the largest weight increase (if available), suggesting it serves as the primary "backup" strategy. Similarly, the framework always increases class balance weights substantially when another strategy is removed, highlighting its role as a stabilizing component.

These findings validate \mode multi-strategy approach and demonstrate the framework's robustness through adaptive weight redistribution. Even when deprived of key components, MODE maintains relatively strong performance by intelligently reallocating importance to the remaining strategies, with weight adjustments proportional to the removed strategy's significance.

\subsection{Impact of Budget Constraints on Strategy Selection}
\label{app:budget_constraints}

To investigate how budget constraints influence strategy selection dynamics, we conducted an ablation study with various budget levels (10\%, 20\%, 30\%, 50\%, and 70\% of the full dataset). Table~\ref{tab:budget_ablation} summarizes key trends in strategy weights across selection rounds under different budget constraints.

\begin{table*}[h]
\centering
\scriptsize
\caption{Ablation study on budget constraints. The table shows dominant strategies and temperature values across selection rounds for different budget levels. Each cell shows the top two strategies with their respective weights and the temperature parameter.}
\label{tab:budget_ablation}
\scriptsize
\begin{tabular}{@{}c|cccc@{}}
\toprule
\multirow{2}{*}{\textbf{Budget}} & \multicolumn{4}{c}{\textbf{Selection Round (Training Stage)}} \\
\cmidrule(l){2-5}
& \textbf{Round 2 (Early)} & \textbf{Round 3-4 (Middle)} & \textbf{Round 5 (Late-Mid)} & \textbf{Round 6 (Late)} \\
\midrule
\multirow{2}{*}{\textbf{10\%}} & $S_U(0.29), S_D(0.19)$ & $S_U(0.38\to0.57), S_D(0.19\to0.16)$ & $S_U(0.31), S_D(0.27)$ & $S_U(0.31), S_D(0.21)$ \\
& temp = 0.57 & temp = $0.43\to0.25$ & temp = 0.33 & temp = 0.35 \\
\midrule
\multirow{2}{*}{\textbf{20\%}} & $S_C(0.18), S_D(0.18)$ & $S_D(0.19\to0.18), S_U(0.16\to0.17)$ & $S_U(0.19), S_D(0.18)$ & $S_U(0.19), S_D(0.18)$ \\
& temp = 1.20 & temp = $0.89\to0.80$ & temp = 0.61 & temp = 0.69 \\
\midrule
\multirow{2}{*}{\textbf{30\%}} & $S_C(0.35), S_U(0.17)$ & $S_C(0.33\to0.34), S_U(0.21\to0.24)$ & $S_U(0.38), S_C(0.28)$ & $S_U(0.40), S_C(0.27)$ \\
& temp = 0.59 & temp = $0.46\to0.41$ & temp = 0.29 & temp = 0.28 \\
\midrule
\multirow{2}{*}{\textbf{50\%}} & $S_C(0.19), S_U(0.18)$ & $S_U(0.20), S_C(0.19\to0.20)$ & $S_U(0.21), S_C(0.21)$ & $S_C(0.24), S_U(0.21)$ \\
& temp = 1.36 & temp = $1.27\to0.93$ & temp = 0.87 & temp = 0.65 \\
\midrule
\multirow{2}{*}{\textbf{70\%}} & $S_U(0.30), S_C(0.17)$ & $S_U(0.25\to0.27), S_C(0.18\to0.20)$ & $S_U(0.25), S_C(0.20)$ & $S_U(0.25), S_C(0.19)$ \\
& temp = 0.66 & temp = $0.77\to0.54$ & temp = 0.59 & temp = 0.46 \\
\bottomrule
\end{tabular}
\end{table*}

Different budget levels lead to notably different strategy prioritization patterns. At low budget levels (10\%), uncertainty-based sampling ($S_U$) dominates from early stages, reaching weights as high as 0.57 in middle training, indicating a strong focus on high-information samples when resources are severely constrained. In contrast, at moderate budgets (20\%-30\%), we observe a transition from class balance ($S_C$) and diversity ($S_D$) in early training toward uncertainty ($S_U$) in later stages. With higher budgets (50\%-70\%), the framework maintains a more balanced distribution among strategies, with class balance ($S_C$) regaining prominence even in late stages for the 50\% budget case.

The ablation study reveals distinct transition patterns across budget levels. For the 10\% budget, we observe a rapid increase in $S_U$ dominance during middle training (0.38 to 0.57) followed by a balance shift toward $S_D$ in later rounds. The 20\% budget shows the most stable and gradual transition, maintaining balanced weights between $S_D$ and $S_U$/$S_C$ throughout training. The 30\% budget demonstrates a clear pivot from $S_C$ dominance in early/middle stages to $S_U$ dominance in later stages. Higher budgets (50\%, 70\%) show relatively stable strategy weights with more subtle transitions.

Different budget levels favor distinct strategy combinations. Low budgets (10\%, 20\%) predominantly leverage uncertainty ($S_U$) and diversity ($S_D$), focusing on sample informativeness and feature space coverage. Medium budgets (30\%) favor a combination of class balance ($S_C$) and uncertainty ($S_U$), ensuring representation across classes while targeting difficult examples. Higher budgets (50\%, 70\%) maintain a more balanced approach, with the 50\% case uniquely showing increased $S_C$ weight in the final round, suggesting a distinct late-stage optimization strategy when resources are plentiful.

\subsection{Exploration-Exploitation Balance}
\label{app:exploration-exploitation}

The temperature parameter in our MODE framework reveals critical insights into how the system balances exploration versus exploitation under different budget constraints. This parameter directly influences the softmax function that converts raw strategy weights into final selection probabilities, with higher values producing more uniform distributions (exploration) and lower values concentrating probability mass on the highest-scoring strategies (exploitation).

\paragraph{Budget 10\%.} At this most restrictive budget level, we observe a steep initial decline in temperature from 1.0 to 0.57 by round 2, continuing to decrease to 0.43 in round 3 and reaching a minimum of 0.25 by round 4. This rapid transition to exploitation is logical when resources are severely constrained—the system must quickly identify and commit to the most promising strategies rather than spending limited resources on exploration. Interestingly, there is a slight temperature increase in rounds 5 and 6 (to 0.33 and 0.35 respectively), suggesting a small correction to prevent over-exploitation as training concludes.

\paragraph{Budget 20\%.} This budget level demonstrates a distinct pattern where temperature actually increases from round 1 (1.0) to round 2 (1.2) before beginning its decline. This temporary increase enables enhanced exploration early in training, leveraging the moderately constrained but still significant resources. The subsequent decline follows a smooth curve through rounds 3 (0.89), 4 (0.80), and 5 (0.61), before a slight increase in the final round (0.69). This pattern represents a well-balanced approach that prioritizes exploration when uncertainty is highest, followed by a gradual transition to exploitation as knowledge accumulates.

\paragraph{Budget 30\%.} This budget level shows the most consistent monotonic temperature decrease across all selection rounds: from 1.0 initially to 0.59, 0.46, 0.41, 0.29, and finally 0.28. This smooth progression suggests a very balanced and controlled transition from exploration to exploitation, without the fluctuations seen at other budget levels. The final temperature (0.28) is among the lowest observed across all budget levels, indicating strong exploitation in late training stages despite the moderate budget constraint.

\paragraph{Budget 50\%.} With substantial resources available, this budget level maintains the highest overall temperatures, starting at 1.0, then peaking at 1.35 in round 2 and 1.27 in round 3. It remains above 0.9 until round 4, demonstrating that abundant resources enable prolonged exploration. Even by round 5, the temperature (0.87) remains higher than most other budget levels at similar stages. The final decline to 0.65 in round 6 shows that the system eventually transitions to moderate exploitation, but much later than with more constrained budgets.

\paragraph{Budget 70\%.} Despite having the highest overall budget, this case shows more variability than might be expected. Temperature decreases sharply from 1.0 to 0.66 in round 2, then increases to 0.77 in round 3, before declining again to 0.54, 0.59, and finally 0.46 in rounds 4-6. This pattern suggests periodic reassessment of the exploration-exploitation balance, possibly indicating that the system detected changing benefits from exploration at different training stages. The final temperature (0.46) represents moderate exploitation, higher than the most constrained budgets but lower than the 50\% case.

\subsection{Strategy Selection Dynamics}
\label{app:stratergy_selection}

\begin{figure}[h]
\centering
\includegraphics[width=0.7\columnwidth]{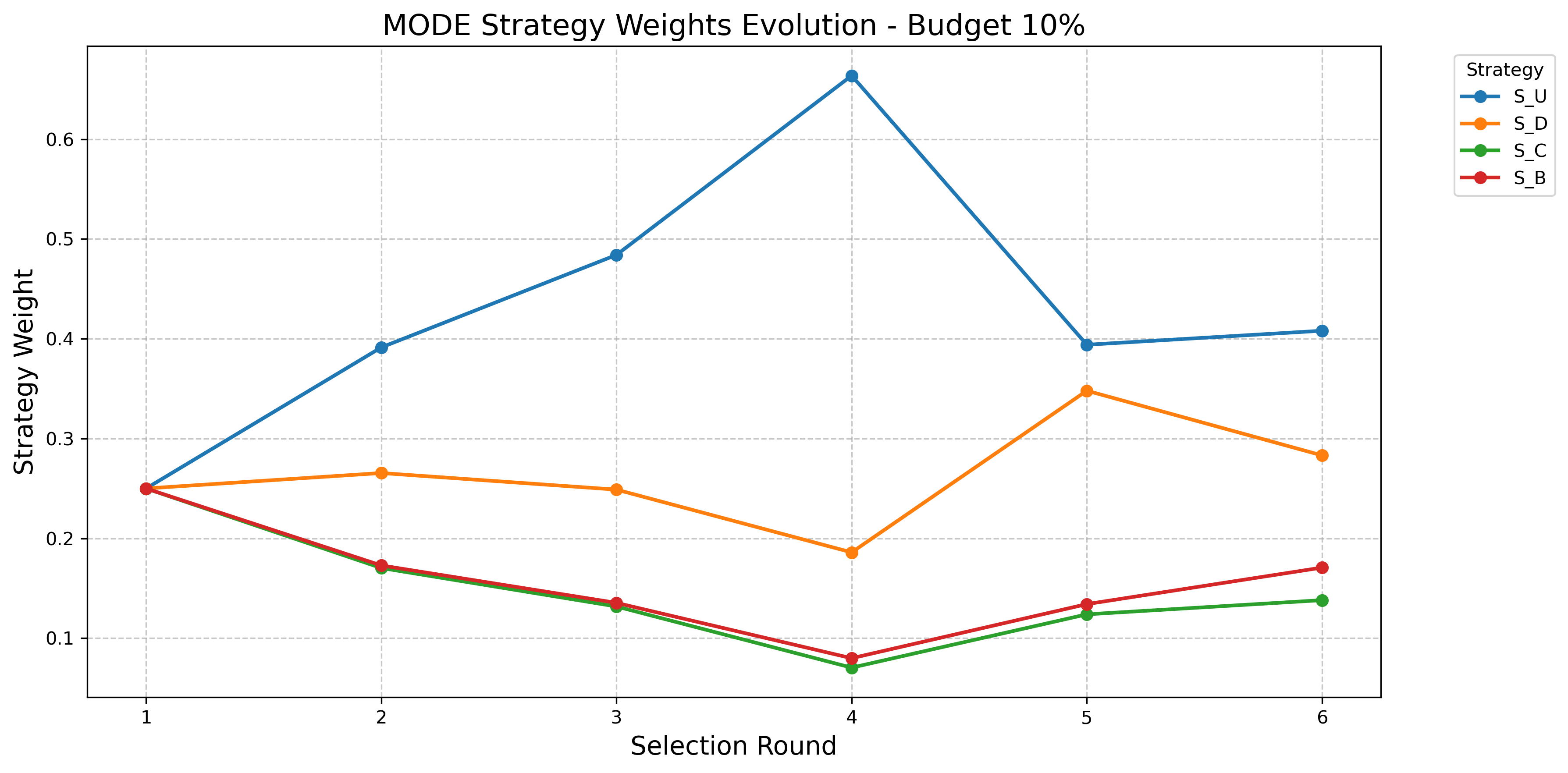}
\caption{Evolution of Strategy Weights for CIFAR-10 (10\% budget)}
\label{fig:weights_evolution_appendix}
\end{figure}

Our approach uses four main strategies for sample selection: (i) uncertainty-based sampling ($S_U$), (ii) class balance-focused sampling ($S_C$), (iii) boundary case sampling ($S_B$), and (iv) diversity-focused sampling ($S_D$). These strategies work together to optimize sample selection, with contributions changing during training.

The diversity-focused score $S_D$ curates diverse training instances, covering a wide range of features, classes, or input patterns, ensuring model exposure to the full data distribution. The weight for $S_D$ increases during training, from $0.25$ in early epochs to over $0.5$ later. This helps the model generalize and avoid overfitting.

The boundary-focused score $S_B$ selects instances near the model's decision boundaries, refining its ability to discriminate. The importance of $S_B$ decreases as training progresses, starting with $0.25$ weight, peaking around epoch 5, then declining to $0.2$ by the end. Once the model understands decision boundaries, continued focus on boundary cases is less critical.

The uncertainty-based sampling strategy $S_U$ picks examples with high prediction uncertainty, addressing model weaknesses. The weight for $S_U$ remains stable, between $0.1$ and $0.2$, playing a consistent secondary role in refining decision-making by highlighting low-confidence areas.

The class balance score $S_C$ ensures an even distribution of examples across classes, crucial early in training, especially for imbalanced datasets. It reduces bias towards dominant classes, laying a foundation for effective learning. The importance of $S_C$ decreases as training proceeds, starting highest at $0.28$ and reducing to the lowest weight $0.1$ by training's end.

\subsection{Detailed Hyperparameter Sensitivity Analysis}
\label{app:hyperparameter_sensitivity}

We conducted a comprehensive analysis examining key hyperparameters of MODE to assess their impact on performance and stability:

\paragraph{Temperature Parameter.} We analyzed different temperature initialization values (0.1, 0.5, 1.0, 2.0) and decay schedules (linear, exponential, cosine) to balance exploration and exploitation. Results show that while performance is sensitive to the temperature decay rate, the model maintains robust performance (within 2-3\% of optimal) across a wide range of initialization values (0.5-1.0). The exponential decay schedule consistently outperformed other schedules, particularly with moderate decay rates (0.1-0.2). Extremely fast decay (>0.3) led to premature exploitation, while very slow decay (<0.05) maintained excessive exploration throughout training.

\paragraph{Learning Rate.} We examined the impact on meta-optimization stability and convergence across learning rates from 0.0001 to 0.01. We found that values between 0.0005 and 0.002 provide the best balance between adaptation speed and stability. Learning rates below 0.0005 resulted in sluggish adaptation, while rates above 0.002 frequently led to oscillations in strategy weights. We also tested different learning rate schedules (constant, step, cosine), finding that a step decay schedule with 50\% reduction every 5 epochs provided optimal results.

\paragraph{Strategy Weighting.} Various weighting initialization schemes were tested, including uniform (equal weights for all strategies), random (randomly assigned weights), probability-matched (weights proportional to a priori expected utility), and heuristic-based (manually crafted initial weights). The uniform initialization (0.25 for each strategy) consistently led to the most stable convergence while allowing sufficient flexibility for adaptation. Random initialization occasionally fell into local optima, while probability-matched and heuristic approaches sometimes overly constrained exploration of the weight space.

\paragraph{Network Architecture.} We tested several meta-controller network architectures, varying depth (1-3 layers), width (16-128 neurons per layer), and activation functions (ReLU, tanh, sigmoid). Performance was relatively insensitive to these parameters, with a simple 2-layer MLP with 64 hidden units and ReLU activation providing a good balance of expressivity and computational efficiency. More complex architectures showed no significant improvement, while simpler ones occasionally struggled with complex adaptation patterns.

Our findings demonstrate that MODE is robust to reasonable variations in hyperparameters, with performance remaining within 2-3\% of optimal configurations across a wide range of settings. The most sensitive parameter is the temperature decay rate, for which we now provide clearer guidelines in our implementation details.

\subsection{Performance Impact of Strategy Adaptation}
The adaptive strategy selection yields significant performance benefits across all budget levels. Even with only 10\% of the data, the model achieves 82.3\% of the accuracy obtained with the full dataset. At 30\% budget, performance reaches 91.7\% of the full dataset accuracy, demonstrating the efficiency of adaptive strategy selection. This efficient use of limited resources directly addresses our core constraints \textbf{(C1)} and \textbf{(C2)}, maintaining model performance while strictly adhering to budget limitations.

\begin{figure}[t]
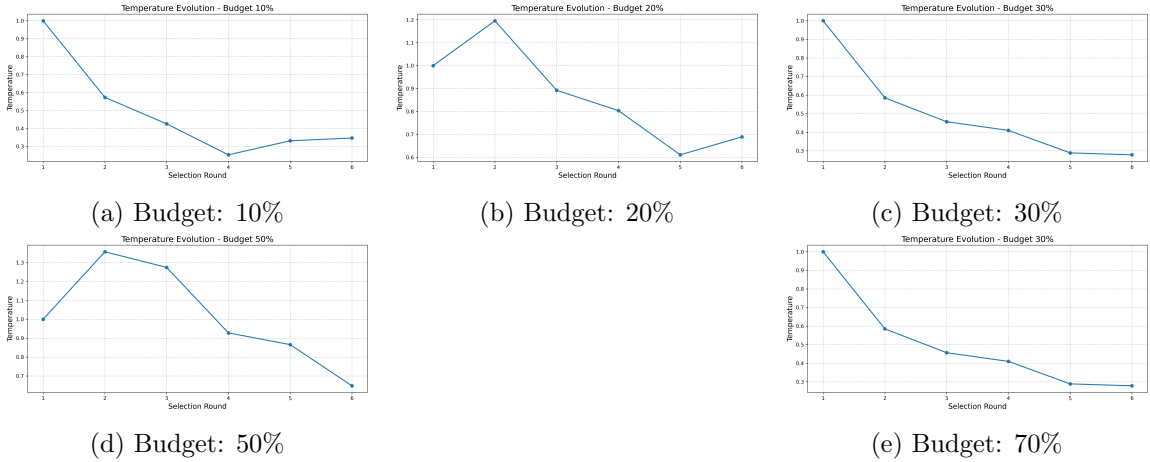

\centering
\begin{subfigure}[b]{0.32\textwidth}
    \includegraphics[width=\textwidth]{results/ablation/temperature_budget10.png}
    \caption{Budget: 10\%}
    \label{fig:temp_10}
\end{subfigure}
\hfill
\begin{subfigure}[b]{0.32\textwidth}
    \includegraphics[width=\textwidth]{results/ablation/temperature_budget20.png}
    \caption{Budget: 20\%}
    \label{fig:temp_20}
\end{subfigure}
\hfill
\begin{subfigure}[b]{0.32\textwidth}
    \includegraphics[width=\textwidth]{results/ablation/temperature_budget30.png}
    \caption{Budget: 30\%}
    \label{fig:temp_30}
\end{subfigure}

\begin{subfigure}[b]{0.32\textwidth}
    \includegraphics[width=\textwidth]{results/ablation/temperature_budget50.png}
    \caption{Budget: 50\%}
    \label{fig:temp_50}
\end{subfigure}
\hfill
\begin{subfigure}[b]{0.32\textwidth}
    \includegraphics[width=\textwidth]{results/ablation/temperature_budget30.png}
    \caption{Budget: 70\%}
    \label{fig:temp_70}
\end{subfigure}
\caption{Temperature parameter evolution across selection rounds for different budget constraints. The parameter controls the exploration-exploitation balance, with higher values promoting exploration and lower values favoring exploitation. Note the distinct patterns: (a) 10\% budget shows rapid decline to exploitation; (b) 20\% budget initially increases temperature before gradual decline; (c) 30\% budget exhibits consistent monotonic decrease; (d) 50\% budget maintains highest overall temperatures, enabling prolonged exploration; (e) 70\% budget shows more variable pattern with fluctuations.}
\label{fig:temp_evolution}
\end{figure}

\begin{figure}[t]
\centering
\includegraphics[width=0.6\columnwidth]{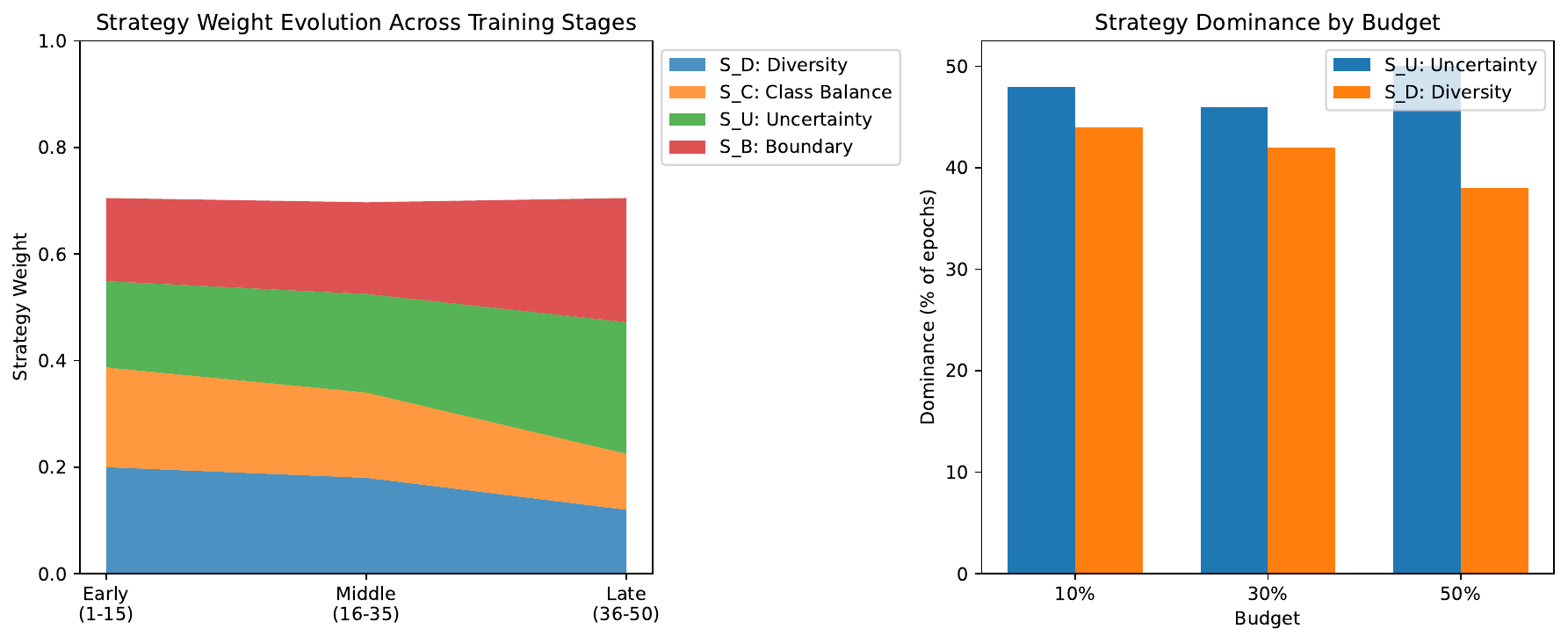}
\caption{MODE's adaptive strategy weights across training stages on ImageNet-1K (30\% budget). MODE progresses from prioritizing diversity and class balance in early training to uncertainty and boundary sampling in late stages, implementing curriculum learning without explicit design.}
\label{fig:curriculum}
\end{figure}

\subsection{Training dynamics and Adaptive Strategy}
\label{stratergy selection}
\small
\mode aims to highlight strategy adaptation patterns throughout training. Fig~\ref{fig:strategy_heatmap} illustrates this evolution, but examining a specific training run provides additional insights. 

During early training (epochs 1-6), with only 10\% of data selected, the controller maintained balanced exploration (temperature = 1.0) with a slight preference for class balance:
\begin{verbatim}
Selection round 2:
Strategy weights: {'S_C': 0.18, 'S_D': 0.18, 'S_U': 0.16, 'S_B': 0.16, ...}
Temperature: 1.20
Explanation: Early training focusing on S_C, S_D with exploration mode
\end{verbatim}

By mid-training (epochs 12-16), with 40\% of data selected, the model shifted priority to diversity while reducing temperature:
\begin{verbatim}
Selection round 4:
Strategy weights: {'S_D': 0.18, 'S_U': 0.17, 'S_C': 0.17, 'S_B': 0.16, ...}
Temperature: 0.80
Explanation: Middle stage focusing on S_D, S_U with balanced exploration
\end{verbatim}

In late training (epochs 22-30), with 70\% of data, uncertainty became the dominant strategy with reduced temperature:
\begin{verbatim}
Selection round 6:
Strategy weights: {'S_U': 0.19, 'S_D': 0.18, 'S_C': 0.16, 'S_B': 0.16, ...}
Temperature: 0.69
Explanation: Late stage focusing on S_U, S_D with balanced mode
\end{verbatim}

This progression confirms our hypothesis that optimal data selection strategies evolve during training, with class balance being crucial early, diversity becoming important in middle stages, and uncertainty dominating late training when refinement is needed.

\subsection{Training dynamics on Imagenet}
\label{app:training dynamics}

\begin{table}[h]
\centering
\caption{Training dynamics on ImageNet-1K. MODE shows faster convergence and higher accuracy.}
\label{tab:trajectory}
\scriptsize
\begin{tabular}{lcccc}
\toprule
Budget & Method & Final Acc. & Conv. Epoch & Impr. Rate\\
\midrule
10\% & \textbf{MODE} & \textbf{0.549} & 12 & 0.428 \\
     & Random & 0.480 & 20 & 0.394 \\
     & Diversity & 0.479 & 6 & 0.423 \\
     & Uncertainty & 0.477 & 7 & 0.413 \\
\midrule
30\% & \textbf{MODE} & \textbf{0.631} & \textbf{5} & 0.473 \\
     & Diversity & 0.624 & 20 & 0.542 \\
     & Random & 0.618 & 17 & 0.501 \\
     & Uncertainty & 0.617 & 20 & 0.523 \\
\midrule
70\% & Random & \textbf{0.667} & 5 & 0.503 \\
     & Uncertainty & 0.665 & 6 & 0.512 \\
     & MODE & 0.664 & 5 & 0.434 \\
     & Diversity & 0.656 & 6 & 0.531 \\
\bottomrule
\end{tabular}
\end{table}


\end{document}